\documentclass{article}

\usepackage[preprint]{neurips_2026}

\usepackage[utf8]{inputenc} % allow utf-8 input
\usepackage[T1]{fontenc}    % use 8-bit T1 fonts
\usepackage{hyperref}       % hyperlinks
\hypersetup{hypertexnames=false}
\usepackage{url}            % simple URL typesetting
\usepackage{booktabs}       % professional-quality tables
\usepackage{amsfonts}       % blackboard math symbols
\usepackage{nicefrac}       % compact symbols for 1/2, etc.
\usepackage{microtype}      % microtypography
\usepackage{xcolor}         % colors
\usepackage{graphicx}       % figures

\usepackage{amsmath,amssymb,amsthm}
\usepackage{bm}
\usepackage{nicefrac}
\usepackage{mathtools}
\usepackage{microtype}
\usepackage{algorithm}
\usepackage[noend]{algpseudocode}

\newcommand{\R}{\mathbb{R}}
\newcommand{\E}{\mathbb{E}}
\newcommand{\ones}{\mathbf{1}}
\newcommand{\cF}{\mathcal{F}}
\DeclareMathOperator*{\argmax}{arg\,max}

\usepackage{hyperref}       % hyperlinks
\hypersetup{
	colorlinks   = true, %Colours links instead of ugly boxes
	urlcolor     = blue, %Colour for external hyperlinks
	linkcolor    = blue, %Colour of internal links
	citecolor   = blue %Colour of citations
}

\newtheorem{theorem}{Theorem}
\newtheorem{proposition}[theorem]{Proposition}
\newtheorem{lemma}[theorem]{Lemma}
\newtheorem{corollary}[theorem]{Corollary}
\theoremstyle{definition}

\theoremstyle{remark}
\newtheorem{remark}[theorem]{Remark}
\usepackage{amsthm}
\newtheorem{definition}{Definition}

\title{Are Stochastic Multi-objective Bandits Harder than Single-objective Bandits?}

\author{%
  Changkun Guan\\
  H. Milton Stewart School of Industrial and Systems Engineering \\
  Georgia Institute of Technology \\
  \texttt{cguan62@gatech.edu} \\
  \And
  Mengfan Xu\thanks{Corresponding author}\\
  Mechanical and Industrial Engineering \\
  University of Massachusetts Amherst \\
  \texttt{mengfanxu@umass.edu} \\
}

\begin{document}

\maketitle

%The learner does not need to recover the entire Pareto front, because committing (namely certifying) one Pareto-optimal arm already makes Pareto regret vanish.
\begin{abstract}
Multi-objective bandits have attracted increasing attention for their broad
applicability, with \(d\)-dimensional reward vectors inducing Pareto regret.
There has been a subtle debate over whether this added structure makes the
problem fundamentally harder than single-objective bandits. We answer this by showing that, in terms of Pareto regret, it is surprisingly no harder: Pareto regret scales
inversely with \(g^\dagger\), the largest objective-wise suboptimality gap, and
thus matches the smallest objective-wise classical regret. We formalize this
idea via a novel method with upper and lower confidence-bound estimators
for every arm-objective pair. It uses top-two races to compare arms within
each objective and an uncertainty-greedy rule to allocate exploration toward
the largest objective-wise gap \(g^\dagger\), until the corresponding
Pareto-optimal arm is committed to. We prove that it achieves Pareto regret of
\(O(\nicefrac{\log T}{g^\dagger})\), where \(T\) is the horizon, with
\emph{no dependence on \(d\)}. A matching lower bound of
\(\Omega(\nicefrac{\log T}{g^\dagger})\) implies optimality. We evaluate the method on synthetic and real-world datasets, confirming the
theory and achieving order-of-magnitude reductions in Pareto regret over
baselines. Real-world results further show that our method commits to a Pareto optimal arm, possibly at the cost of empirical fairness, suggesting a potential hardness absent in single-objective bandits.\end{abstract}

%Real-world results further show that our method always commits to a Pareto-optimal arm and suggest that fairness is an empirical plausible hardness absent in single-objective bandits. 

%Real-world results further show that our method commits to a Pareto-optimal arm,
%possibly at the cost of empirical fairness, suggesting a hardness absent in single-objective bandits.
%Consequently, the single-objective-like scale cannot be uniformly improved over the full stochastic MO-MAB model.
%can substantially lower Pareto regret while 

\section{Introduction}
\label{sec:introduction}
Standard multi-armed bandits model sequential decision-making under uncertainty through a single scalar reward \citep{auer2002finite, auer2002nonstochastic}. This abstraction has been highly successful across a wide range of applications, from clinical trials to recommendation systems. In many modern settings, however, actions are evaluated along several criteria simultaneously rather than by a single score.  This motivates the study of multi-objective multi-armed bandits (MO-MABs) \citep{drugan2013designing, drugan2014pareto, xu2023pareto, huyuk2021multi, busa2017multi}. In a MO-MAB problem, the learner chooses an arm at each round and observes a $d$-dimensional reward vector rather than a scalar. Because the objectives may compete, there need not be a single universally best arm. A standard benchmark is Pareto regret, which measures performance relative to the Pareto frontier of the true reward vectors.

A natural intuition in stochastic MO-MABs is that the problem should become
harder as the number of objectives grows \citep{drugan2013designing}. Pareto
optimality is defined through coordinate-wise comparisons across all \(d\)
dimensions, and the Pareto frontier can become increasingly intricate.
Furthermore, existing gap-dependent bounds for Pareto UCB1
\citep{drugan2013designing} depend on the collection of Pareto gaps over all
dominated arms, rather than on a single objective-wise gap. From the analysis, one might expect Pareto regret to carry
an explicit dependence on the ambient multi-objective complexity and to increase
with dimensionality, \emph{suggesting that stochastic multi-objective bandits might be intrinsically
harder.}

Recent work by \citep{xu2023pareto} shows that, in adversarial
multi-objective bandits, Pareto regret can scale at the same order as
single-objective bandit regret and can be linked to minimal one-dimensional
marginal regret. This suggests that \emph{Pareto regret need not inherit the
full apparent complexity of the ambient multi-objective structure; from this
Pareto-regret perspective, adversarial multi-objective bandits are not
necessarily harder.} However, this leads to a subtle debate rather than a settled conclusion, because
these insights do not directly resolve the stochastic setting studied in
\citep{drugan2013designing}. First, the
adversarial result concerns minimax worst-case rates over the horizon and
adopts a regret notion different from that in stochastic multi-objective
bandits. Second, although the connection between Pareto regret and marginal
regret is conceptually suggestive, existing methods may still scale with the
largest marginal-regret scale across dimensions in theory. It remains
unexplored, under i.i.d.\ rewards in the stochastic full-feedback model, how to
determine the gap-dependent scale of Pareto regret, how it relates to classical
single-objective regret, and how to design an algorithm that achieves this
scale. This scale is crucial: it determines whether Pareto regret reflects the
hardest objective-wise gap or the easiest gap, and hence \emph{whether
multi-objective bandits are genuinely harder than single-objective bandits.}
Section~\ref{sec:related} presents a thorough review of related work.%\textcolor{blue}{MX to do: scale of marginal regret is important but not resolved by this paper as well.}

%While promising, this presents a debate. However, this debate itself deservedoes not settle the present subtle debate.

The main research problem is therefore 

\begin{center}
\emph{How does Pareto regret relate to marginal regret in stochastic bandits, and can this relationship be exploited to achieve the easiest marginal-regret scale, avoiding the full cost by \citep{drugan2013designing} and \citep{xu2023pareto}?}\end{center}

Our affirmative answer is that \textbf{stochastic MO-MAB is no harder than the
single-objective bandit problem in the Pareto regret sense}. Specifically, in
the stochastic full-feedback setting, Pareto regret \citep{drugan2013designing}
can again be controlled through marginal regret, consistent with the conclusion
from the adversarial setting \citep{xu2023pareto}, despite the different regret
definition. To this end, we identify a problem-class property under which every
marginal maximizer is Pareto-optimal. The relevant instance-dependent quantity
is the largest objective-wise suboptimality gap, defined formally in
Section~\ref{subsec:objective_gap}. This is encouraging: it shows that one can
conceptually achieve the easiest marginal-regret scale, echoing the insight of
\citep{xu2023pareto}. %through one arm that maximizes a single objective, rather than through the full Pareto set.
%\textcolor{blue}{MX: certification is not defined} \textcolor{orange}{CG: Already fixed all the certification words before Section 3.}

%we propose a novel method that searches for the best arm that actually achieves the minim.

%\textcolor{blue}{MX: there should still be a paragraph here on the method, e.g., what is the high level idea of the method, and also highlight the novelty.}

%\textcolor{orange}{CG: I kind of writing a paragraph in the remark 2 that might highlight the difference between our methods with the other methods. But I fully agree that we need a paragraph here as well.}

%It is enough to find one arm in \(A^\star\), because all later pulls of that arm add zero Pareto regret. If every arm that maximizes an objective is Pareto-optimal, then identifying an arm as the unique maximizer of some objective is enough to find an arm in \(A^\star\).

We propose a novel algorithm that exploits this structural
relationship. The key observation is that minimizing Pareto regret does not
require learning the entire Pareto set; instead, it suffices to identify an
objective with the largest objective-wise suboptimality gap (but \emph{unknown}) and commit to its
best arm. The algorithm performs an adaptive search
over objective-wise two-arm comparisons. For each objective, the algorithm maintains the two arms with the largest upper
confidence bounds, encouraging additional exploration under objective-wise
uncertainty. It then uses the gap between lower and upper confidence estimates
to infer the objective-wise suboptimality gap and checks whether the leading arm
is separated from the runner-up by their confidence intervals. Before such a
separation occurs, the next pull is chosen from the two arms along the objective with the largest confidence-radius sum. This serves as a uniform
upper-bound proxy for the objective-wise gap; within that comparison, the arm
with the larger confidence radius is sampled. This creates a two-level exploration--exploitation mechanism: across
objectives, the algorithm explores the comparison with the largest uncertainty
and potential gap, while exploiting the proxy for the
objective-wise gap; within that objective, it samples the more uncertain arm.
Once a separation occurs, the confidence radius has fallen below the largest
certified objective-wise gap. The algorithm then commits to the corresponding
objective and selects its best arm. %The novelty is that the algorithm neither fixes a scalar objective in advance nor estimates the full Pareto set. Instead, it performs a meta-level exploration--exploitation procedure over both objectives and arms. %I%t uses all objectives as possible ways to identify one Pareto-optimal arm, which allows the regret bound to depend on the largest objective-wise suboptimality gap rather than on the full collection of dominated-arm Pareto gaps.}

Additionally, we prove a finite-time Pareto regret bound of order $O\!(K \nicefrac{\log T}{g^\dagger}),$ where \(g^\dagger\) denotes the largest objective-wise suboptimality gap. The leading term has no explicit multiplicative dependence on the number of objectives and depends on this objective-wise suboptimality gap rather than on a sum over dominated-arm Pareto gaps. We also prove an asymptotic lower bound of order $\Omega\!(K \nicefrac{\log T}{g^\dagger})$ on a natural duplicated-coordinate Bernoulli subclass of the full model. Consequently, the order \(K\nicefrac{\log T}{g^\dagger}\) cannot be improved in general by any uniformly good policy, and our upper bound is tight up to constant factors. As a byproduct, it yields a high-probability guarantee that the committed arm is also Pareto-optimal.

%To complement the theory, we present a numerical study designed to examine whether finite-horizon Pareto regret is governed more by the single certification gap $g^\dagger$ than by the full collection of dominated-arm Pareto gaps. We consider a synthetic family in which $g^\dagger$ is kept fixed while both the Pareto gap and the number of dominated arms are varied. This isolates the setting in which Pareto UCB1 becomes costly because its leading term depends on many small arm-wise Pareto gaps, whereas our method only needs to certify one objective winner. Across the tested configurations, the width-guided first-certification policy attains lower finite-horizon Pareto regret than Pareto UCB1. The trajectory plots further show how this advantage develops once certification is achieved.
To examine the practical effectiveness of the proposed method, we evaluate the width-guided method on real-world 
Spotify and KuaiRec benchmarks and on a controlled synthetic family. The results
confirm the theory across both types of datasets and further demonstrate the
method's practical value against widely used baselines. On the real-data
subsets, Width-guided attains the lowest Pareto regret in the reported groups,
with order-of-magnitude ($\geq \mathbf{111}$ times)  improvements on Spotify datasets, and always selects a Pareto-optimal arm
in every reported Spotify and KuaiRec group. We also report an empirical
fairness measure. Although Width-guided is designed only to reduce Pareto regret
and identify a Pareto-optimal arm, its fairness behavior is not uniformly worse:
it loses fairness in some groups, but is surprisingly best-of-all-worlds in
Pareto regret, Pareto-optimal-arm detection, and empirical fairness in others.
This mixed behavior suggests that fairness deserves theory-oriented study, beyond as an empirical measure.

% Since synethtic study not in the main text. I remove it: The synthetic study is designed to test whether Pareto regret is controlled by \(g^\dagger\) rather than by all dominated-arm Pareto gaps. It keeps \(g^\dagger\) fixed and varies either  (\Delta_{\min}^{\mathrm P}\) or the number of positive-Pareto-gap arms close to the Pareto front. Thus the scale in our bound remains fixed, while the gap-dependent bound for Pareto UCB1 becomes larger.}

%\textcolor{blue}{MX: perhaps revise this last paragraph to match the insights from newly added experiments? }

%\textcolor{blue}{CG: Fixed}
%The remainder of the paper is organized as follows. Section~\ref{sec:related} reviews related work. Section~\ref{sec:problem} introduces the stochastic MO-MAB model, Pareto regret, and the certification gap that drives our analysis. Section~\ref{sec:algorithm} presents the proposed width-guided first-certification algorithm. Section~\ref{sec:results} establishes our main guarantees, including a finite-time upper bound, an asymptotic lower bound, and a comparison with Pareto UCB1. Section~\ref{sec:computational} reports the numerical study, and Section~\ref{sec:conclusion} concludes. Omitted proofs and supplementary derivations are collected in Appendix

\section{Problem Formulation}
\label{sec:problem}

We formulate the stochastic multi-objective bandit problem and introduce the
notation used throughout the paper. We consider a stochastic multi-objective bandit problem with \(K\) arms and
\(d\) objectives. For each arm \(a\in[K]:=\{1,\dots,K\}\), let
\(Y_{a,1},Y_{a,2},\ldots\in[0,1]^d\) be i.i.d. reward vectors with mean
\(\mu_a=(\mu_a^{(1)},\ldots,\mu_a^{(d)})\in\mathbb{R}^d\). The reward sequences
are independent across arms. At each round \(1 \leq t \leq T\) where $T$ is time horizon, a policy (decision maker) selects an arm
\(A_t\in[K]\) based on the history observed so far and receives the next unused
sample from that arm, \(X_t:=Y_{A_t,N_{A_t}(t-1)+1}\), where
\(N_a(t):=\sum_{s=1}^t \mathbf{1}\{A_s=a\}\) denotes the number of pulls of arm
\(a\) up to time \(t\). A key feature of the feedback model is that pulling an
arm reveals the entire \(d\)-dimensional reward vector of that arm: all
objectives of the selected arm are observed simultaneously, while no information
is received about unchosen arms.

\subsection{Pareto Order, Pareto Optimality, and Pareto Regret}

\paragraph{Pareto order.}
We compare vector-valued rewards using the Pareto order
\citep{drugan2013designing}. For \(x,y\in\mathbb{R}^d\), write
\(x\succeq y\) if \(x^{(j)}\ge y^{(j)}\) for all \(j\in[d]\), and write
\(x\succ y\) if \(x\succeq y\) and \(x\neq y\). Thus, \(x\succ y\) means that
\(x\) is weakly larger than \(y\) in every coordinate and strictly larger in at
least one coordinate. If neither \(x\succeq y\) nor \(y\succeq x\) holds, then
\(x\) and \(y\) are incomparable.

\paragraph{Pareto optimality.}
An arm \(a\) dominates arm \(b\) if \(\mu_a\succ\mu_b\). The Pareto set is
\(A^\star:=\{a\in[K]: \nexists b\in[K]\text{ such that }\mu_b\succ\mu_a\}\),
and its elements are called Pareto-optimal arms.

Following \citep{drugan2013designing}, the Pareto suboptimality gap of arm
\(a\) is
\[
\Delta_a^{\mathrm P}
:=
\inf\left\{
\epsilon\ge 0:
\mu_a+\epsilon\mathbf{1}
\text{ is not dominated by any } \mu_b,\ b\in[K]
\right\}.
\]
Every Pareto-optimal arm has zero Pareto suboptimality gap, but the converse need not hold. For example, if \(\mu_1=(1,1)\) and \(\mu_2=(1,0)\), then arm \(2\) is
dominated by arm \(1\), while \(\Delta_2^{\mathrm P}=0\), since
\((1+\epsilon,\epsilon)\) is not dominated by \((1,1)\) for any
\(\epsilon>0\). Thus, zero Pareto gap alone does not necessarily imply Pareto
optimality.

\paragraph{Pareto regret.}
We measure performance by Pareto regret \citep{drugan2013designing} over horizon \(T\):
\[
R_T^{\mathrm P}
:=
\sum_{t=1}^T \Delta_{A_t}^{\mathrm P}
=
\sum_{a:\Delta_a^{\mathrm P}>0}
\Delta_a^{\mathrm P}N_a(T).
\]

\emph{Our analysis relates Pareto regret to the objective-wise gaps defined below.}

\subsection{Pareto Regret and Objective-Wise Gaps}
\label{subsec:objective_gap}

We first define the objective-wise gaps used in the regret bound. For objective \(j\), define \(\mu_{(1),j}:=\max_{b\in[K]}\mu_b^{(j)}\) and
\(\Delta_a^{(j)}:=\mu_{(1),j}-\mu_a^{(j)}\). Thus,
\(\Delta_a^{(j)}=0\) means that arm \(a\) is a marginal maximizer of objective
\(j\). Any such arm has zero Pareto gap: for every \(\epsilon>0\),
\(\mu_a^{(j)}+\epsilon>\max_{b\in[K]}\mu_b^{(j)}\), so
\(\mu_a+\epsilon\mathbf{1}\) cannot be dominated by any arm.

For
objective \(j\), let
\(M_j:=\arg\max_{a\in[K]}\mu_a^{(j)}
=\{a\in[K]:\Delta_a^{(j)}=0\}\)
be the set of marginal maximizers. If \(M_j\) is a singleton, write
\(M_j=\{a_j^\star\}\) and define
\[
g_j
:=
\mu_{a_j^\star}^{(j)}
-
\max_{a\neq a_j^\star}\mu_a^{(j)}.
\]
If \(M_j\) contains multiple arms, set \(g_j:=0\). Thus, \(g_j>0\) exactly
when objective \(j\) has a unique marginal maximizer whose objective-\(j\) mean is strictly larger than every other arm's objective-\(j\) mean. Finally, define \(j^\dagger\in\arg\max_{j\in[d]}g_j\) and \(g^\dagger:=g_{j^\dagger}\). The parameter \(g^\dagger\) is the largest objective-wise suboptimality gap. For objective \(j\), define the marginal regret
\(
R_T^{(j)}
:=
\sum_{t=1}^T(\mu_{(1),j}-\mu_{A_t}^{(j)})
=
\sum_{a=1}^K \Delta_a^{(j)}N_a(T).
\)

\paragraph{Debate and Objective.} Classical gap-dependent bounds for marginal regret scale as \(1/g_j\)
\citep{lai}. \emph{The ongoing debate is whether Pareto regret is governed by the
largest marginal-regret scale across objectives, \(1/\min_j g_j\), together
with an explicit dependence on the dimension \(d\), as in
\citep{drugan2013designing}, or instead by the smallest such scale,
\(1/\max_j g_j = 1/g^\dagger\).} This debate is important: the former
suggests that multi-objective bandits are intrinsically harder than
single-objective bandits, whereas the latter suggests that they are not. For
example, suppose \(d=2\), \(g_1=10^{-3}\), and \(g_2=1\). Then
\(g^\dagger=1\). The classical marginal regret scale is \(1/g_1=10^3\) for
objective \(1\), but only \(1/g_2=1\) for objective \(2\). A bound governed by
the hardest objective would scale with \(1/\min_j g_j=10^3\), possibly with an
additional factor depending on \(d\), whereas the bound in the latter case scales with
\(1/g^\dagger=1\). Thus, Pareto regret follows the easier objective-wise scale
rather than the hardest one. We resolve this debate herein.

%\textcolor{blue}{MX: would it be possible to add a proposition saying that Zero-Pareto-gap does not guarantee Pareto optimal?}

Motivated by the definitions of Pareto regret and marginal regret, we identify
a subtle distinction between zero Pareto gap and Pareto optimality: the two
notions are not equivalent. Prop.~\ref{prop:zero_gap_not_pareto} shows
that, although the implication
\(a\in A^\star \Rightarrow \Delta_a^{\mathrm P}=0\) always holds, the reverse
implication can fail. Consequently, marginal regret does not automatically
correspond to Pareto optimality and may be less meaningful in the
multi-objective setting without additional structure. To clarify when
objective-wise optimality provides a valid Pareto certificate, we consider a
consistent problem class in Def.~\ref{ass:marginal_pareto_consistency}.

\begin{proposition}[Zero Pareto gap $\neq$ Pareto optimality]
\label{prop:zero_gap_not_pareto}
There exists an arm \(a\notin A^\star\) with \(\Delta_a^{\mathrm P}=0\).
\end{proposition}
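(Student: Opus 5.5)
The statement is existential, so the plan is to exhibit one explicit instance and verify the two required properties directly from the definitions in Section~\ref{sec:problem}. I will reuse the example already given after the definition of the Pareto suboptimality gap: take \(K=2\), \(d=2\), with mean vectors \(\mu_1=(1,1)\) and \(\mu_2=(1,0)\). For concreteness, both arms can have deterministic rewards equal to their means, which lie in \([0,1]^2\), so this is a valid instance of the model.

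The first step shows \(2\notin A^\star\). Since \(1\ge 1\) and \(1\ge 0\), we have \(\mu_1\succeq\mu_2\). Moreover \(\mu_1\neq\mu_2\), so \(\mu_1\succ\mu_2\). Hence arm \(2\) is dominated and is not in the Pareto set.

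The second step shows \(\Delta_2^{\mathrm P}=0\). By the definition as an infimum over \(\epsilon\ge 0\), it suffices to show that \(\mu_2+\epsilon\mathbf{1}\) is not dominated by any \(\mu_b\) for every \(\epsilon>0\). Then the set over which the infimum is taken contains \((0,\infty)\), so the infimum is \(0\). Fix \(\epsilon>0\), so \(\mu_2+\epsilon\mathbf{1}=(1+\epsilon,\epsilon)\).
\begin{itemize}
\item It is not dominated by \(\mu_1=(1,1)\), because the first coordinate satisfies \(1<1+\epsilon\), so \(\mu_1\succeq \mu_2+\epsilon\mathbf{1}\) fails.
\item It is not dominated by \(\mu_2=(1,0)\) for the same reason.
\end{itemize}
Alternatively, one can invoke the observation in Section~\ref{subsec:objective_gap} that any marginal maximizer has zero Pareto gap. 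Arm \(2\) attains \(\max_b\mu_b^{(1)}=1\), so \(\Delta_2^{(1)}=0\), which gives \(\Delta_2^{\mathrm P}=0\) immediately.

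No step is a real obstacle. The only subtlety is that \(\epsilon=0\) itself need not belong to the set: indeed \(\mu_2\) is dominated by \(\mu_1\). This is harmless, because \(\Delta^{\mathrm P}\) is defined as an infimum, not a minimum. I will state this explicitly so the reader sees why a dominated arm still has zero gap: domination can occur through ties in some coordinates, here the first objective, and adding any \(\epsilon>0\) breaks those ties.
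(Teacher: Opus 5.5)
Your proposal is correct and follows essentially the same route as the paper: the same instance \(\mu_1=(1,1)\), \(\mu_2=(1,0)\), domination of arm \(2\) by arm \(1\), and the observation that \((1+\epsilon,\epsilon)\) is undominated for every \(\epsilon>0\), so the infimum is \(0\). Your extra checks (validity of the instance, non-domination by \(\mu_2\) itself, and the remark that \(\epsilon=0\) is excluded but harmless) are fine additions.
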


\begin{definition}[Marginal-Pareto consistency]
\label{ass:marginal_pareto_consistency}
Every marginal maximizer is Pareto-optimal:
\[
\Delta_a^{(j)}=0
\quad\Longrightarrow\quad
a\in A^\star,
\qquad
\forall a\in[K],\ j\in[d].
\]
Equivalently, \(\bigcup_{j=1}^d
\arg\max_{a\in[K]}\mu_a^{(j)}\subseteq A^\star\).
\end{definition}

Def.~\ref{ass:marginal_pareto_consistency} identifies the problem class
in which objective-wise optimality certifies a Pareto-optimal arm, thereby
making marginal regret meaningful for Pareto optimality. It is weaker than
requiring every zero-Pareto-gap arm to be Pareto-optimal, since it only
restricts zero-gap arms that arise as marginal maximizers. It is also weaker
than requiring every objective to have a unique maximizer: uniqueness implies
the condition, but the condition allows ties as long as all tied marginal
maximizers are Pareto-optimal. If \(g^\dagger>0\), then objective
\(j^\dagger\) has a unique marginal maximizer \(a_{j^\dagger}^\star\), and
under Def.~\ref{ass:marginal_pareto_consistency}, this arm belongs to
\(A^\star\).

\section{Methodology: Width-Guided Objective-Wise UCB}
\label{sec:algorithm}

We propose a method that uses confidence bounds to verify that an arm has a strictly larger mean than every other arm on one objective. Under Def.~\ref{ass:marginal_pareto_consistency}, such an arm belongs to \(A^\star\), and therefore has zero Pareto suboptimality gap. Throughout the rest of the paper, a \emph{certificate} means the confidence-bound test that verifies this statement. Once the certificate is accepted, the policy \emph{commits} to the certified arm and pulls this arm for all remaining rounds.

Before a certificate is accepted, the policy must decide which arm to sample. Algorithm \ref{alg:ucb_lcb_guided} presents the complete pseudo code.  Algorithm~\ref{alg:ucb_lcb_guided} compares arms separately within each objective using upper and lower confidence bounds. For each objective, it keeps
the arms with the largest and second-largest upper confidence bounds. It then chooses the objective whose two retained arms have the largest sum of confidence radii, and pulls the arm among those two with the larger confidence radius. Since one pull reveals the full reward vector of the selected arm, this pull updates the estimates of all objectives for that arm. Thus the algorithm does not need to estimate the whole Pareto set. It stops after one objective-wise comparison verifies a unique maximizer (with the largest objective-wise sub-optimality gap), at which point all future pulls have
zero Pareto regret. We now define the empirical quantities used to implement these comparisons.

After the warm start, let
\(
\widehat{\mu}_a^{(j)}(t-1)
:=
\nicefrac{1}{N_a(t-1)}
\sum_{s=1}^{t-1} X_s^{(j)} \mathbf{1}\{A_s=a\}
\)
denote the empirical mean of objective $j$ for arm $a$ based on its first $N_a(t-1)$ samples.

At round $t>K$, define the fixed-horizon confidence radius \(
\beta_a(t) := \sqrt{\nicefrac{2\log T}{N_a(t-1)}}.
\)
%We use a fixed-horizon radius to keep the finite-time analysis transparent and to match the radius used in Theorem~\ref{thm:regret}. 
For every arm $a$ and objective $j$,
\( 
U_a^{(j)}(t) := \widehat{\mu}_a^{(j)}(t-1) + \beta_a(t),
L_a^{(j)}(t) := \widehat{\mu}_a^{(j)}(t-1) - \beta_a(t).
\)
For each objective $j$, let
\( 
b_j(t) := \argmax_{a \in [K]} U_a^{(j)}(t),
c_j(t) := \argmax_{a \in [K]\setminus\{b_j(t)\}} U_a^{(j)}(t),
\)
and define the pair width
\(
W_j(t) := \beta_{b_j(t)}(t) + \beta_{c_j(t)}(t).
\) If some objective $j$ already satisfies
\(
L_{b_j(t)}^{(j)}(t) > U_{c_j(t)}^{(j)}(t),
\)
then \(b_j(t)\) is certified as a Pareto-optimal arm and $j$ is the objective with the largest sub-optimality gap, and the algorithm stores
\(b_j(t)\). If no objective satisfies this inequality, the algorithm chooses an
objective with largest \(W_j(t)\) and samples the arm in
\(\{b_j(t),c_j(t)\}\) with the larger confidence radius $\beta(t)$.

%When several empirical scores are exactly equal, ties are broken uniformly at random. This algorithmic tie-breaking only concerns empirical scores; population ties are already reflected by $g_j=0$ in the objective-wise gap notation above.

\begin{algorithm}[t]
\caption{Width-Guided First-Certification UCB}
\label{alg:ucb_lcb_guided}
\small
\begin{algorithmic}[1]
\Require Number of arms $K$, number of objectives $d$, horizon $T \ge K$
\State Initialize $N_a \gets 0$, $\widehat{\mu}_a \gets 0 \in \R^d$, and stored certified arm $\widehat a \gets \emptyset$
\Statex \textit{Warm start: pull each arm once}
\For{$a=1,2,\dots,K$}
    \State Pull arm $a$ and observe $X_a^{\mathrm{init}}$
    \State $N_a \gets 1$, \ $\widehat{\mu}_a \gets X_a^{\mathrm{init}}$
\EndFor
\For{$t=K+1,K+2,\dots,T$}
    \State $\beta_a(t) \gets \sqrt{2\log T / N_a}$ for all $a \in [K]$
    \State $U_a^{(j)}(t) \gets \widehat{\mu}_a^{(j)} + \beta_a(t)$ and $L_a^{(j)}(t) \gets \widehat{\mu}_a^{(j)} - \beta_a(t)$ for all $a,j$
    \For{$j=1,2,\dots,d$}
        \State $b_j(t) \gets \argmax_{a \in [K]} (\widehat{\mu}_a^{(j)} + \beta_a(t))$
        \State $c_j(t) \gets \argmax_{a \in [K]\setminus\{b_j(t)\}} (\widehat{\mu}_a^{(j)} + \beta_a(t))$
        \State $W_j(t) \gets \beta_{b_j(t)}(t) + \beta_{c_j(t)}(t)$
    \EndFor
    \If{$\widehat a \neq \emptyset$}
        \State $A_t \gets \widehat a$
    \ElsIf{some objective $j$ satisfies $L_{b_j(t)}^{(j)}(t) > U_{c_j(t)}^{(j)}(t)$}
        \State choose one such objective $\widehat j$, store $\widehat a \gets b_{\widehat j}(t)$, and set $A_t \gets \widehat a$
    \Else
        \State $j_t \gets \arg\max_{j \in [d]} W_j(t)$
        \State $A_t \gets \arg\max_{a \in \{b_{j_t}(t),c_{j_t}(t)\}} \beta_a(t)$
    \EndIf
    \State Pull arm $A_t$ and observe $X_t$
    \State $N_{A_t} \gets N_{A_t}+1$
    \State $\widehat{\mu}_{A_t} \gets \widehat{\mu}_{A_t} + \nicefrac{(X_t-\widehat{\mu}_{A_t})}{N_{A_t}}$
\EndFor
\end{algorithmic}
\end{algorithm}

For the analysis, call a round \(t>K\) \emph{non-certifying} if, at the decision point of round \(t\), no certified arm has been stored and no objective \(j\in[d]\) satisfies
\(
L_{b_j(t)}^{(j)}(t) > U_{c_j(t)}^{(j)}(t).
\)
At a non-certifying round, the algorithm selects
\(
j_t \in \arg\max_{j \in [d]} W_j(t),
b_t := b_{j_t}(t),
c_t := c_{j_t}(t),
\)
and then pulls
\(
A_t \in \arg\max_{a \in \{b_t,c_t\}} \beta_a(t).
\)

\begin{remark}[Comparison with existing algorithms]
Existing methods use confidence bounds to establish different
types of guarantees. Pareto-UCB-style algorithms build an optimistic Pareto set
and bound how often arms with positive Pareto suboptimality gap remain
optimistically nondominated. Pareto-front identification methods decide the
Pareto status of many arms. Scalar best-arm identification methods, including
LUCB-style algorithms, reduce to methods with one fixed scalar objective
\citep{kalyanakrishnan2012pac,jamieson2014best}. Algorithm~\ref{alg:ucb_lcb_guided} aims to verify that one arm is the unique
maximizer of a target objective via a two-layer exploration-exploitation
trade-off across arms and objectives, where each layer involves a new design.
It compares two arms within one objective, but the objective is chosen
adaptively through the current width \(W_j(t)\). It does not maintain an
estimated Pareto set or decide the Pareto status of all arms. It stops once one
objective-wise comparison verifies a unique maximizer with the largest
suboptimality gap. This is enough for Pareto regret as such an arm has zero Pareto suboptimality gap. It also ensures that
Pareto regret scales inversely with the largest suboptimality gap, since the
exploration sample complexity leading to the dominant regret term depends on
this gap.
\end{remark}

\section{Theoretical Analyses}
\label{sec:results}

The finite-time analysis below shows that the leading gap-dependent Pareto regret bound is controlled by \(g^\dagger\), the largest objective-wise suboptimality gap, rather than by a sum over all dominated-arm Pareto suboptimality gaps. The leading term has no explicit multiplicative factor in the number of objectives. The reason is that the algorithm stops exploration once the confidence bounds verify a unique maximizer of one objective, which is Pareto-optimal under Def.~\ref{ass:marginal_pareto_consistency}. Together with the lower bound below, this shows that the standard Pareto regret can have the same leading order as a single-objective bandit. We refer to Appendix \ref{app:proofs} for the full proofs of all theoretical results herein.

\subsection{Lower Bound Analysis}

Before stating the finite-time upper bound, we record a lower bound through a simple duplicated-coordinate subclass. This subclass is still $d$-dimensional and is part of the full stochastic MO-MAB model. The lower bound has a uniform full-class role: any algorithm with a uniformly smaller leading order over the full stochastic MO-MAB model would also have to improve on this subclass. This is the lower-bound notion matched to the title question in the Pareto regret sense. It does not claim that every stochastic MO-MAB instance has instance complexity \(K\log T/g^\dagger\), nor that \(g^\dagger\) is the only relevant geometric quantity for all possible Pareto-optimal-arm certificates. Rather, the relevant obstruction is already present inside the model. In particular, the statement is not that every geometry is governed by \(g^\dagger\), but that the full model already contains instances with the matching leading order
\[
\Omega\!(\frac{K\log T}{g^\dagger})
\lesssim
\E[R_T^{\mathrm P}]
\lesssim
O\!(\frac{K\log T}{g^\dagger}).
\]
Thus even within the multi-objective setting studied here, the unavoidable benchmark can remain of the same order as in a single-objective bandit.

\begin{proposition}[Asymptotic lower bound]
\label{prop:lower_bound}
Assume $K\ge 2$ and fix $\Delta_{\mathrm{sc}} \in (0,1/4]$. Consider the duplicated-coordinate Bernoulli instance
\(
Y_{a,n}=(Z_{a,n},\ldots,Z_{a,n})\in[0,1]^d,
\)
where
\(
Z_{1,n}\sim \mathrm{Bernoulli}\!(\frac12+\Delta_{\mathrm{sc}}),
\qquad
Z_{a,n}\sim \mathrm{Bernoulli}\!(\frac12)
\quad\text{for } a=2,\ldots,K,
\)
and all latent samples are independent across arms and pulls. Suppose moreover that, on this duplicated-coordinate subclass, the induced scalar policy obtained by identifying each observed vector $(Z_{a,n},\ldots,Z_{a,n})$ with its underlying Bernoulli sample $Z_{a,n}$ is uniformly good in the Lai--Robbins sense: for every Bernoulli mean vector $\theta\in(0,1)^K$ and every $\alpha>0$, its expected single-objective regret under $\theta$ is $o(T^\alpha)$ as $T\to\infty$. Then
\(
\liminf_{T\to\infty}\frac{\E[R_T^{\mathrm P}]}{\log T}
\ge
\frac{3}{8}\frac{K-1}{\Delta_{\mathrm{sc}}}.
\)
In particular, because $g^\dagger=\Delta_{\mathrm{sc}}$ on this instance,
\(
\E[R_T^{\mathrm P}]
=
\Omega\!(\frac{K\log T}{g^\dagger})
\text{as } T\to\infty.
\)
\end{proposition}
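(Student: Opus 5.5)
The plan is to reduce the statement to the classical Lai--Robbins lower bound for Bernoulli bandits, applied to the induced scalar policy. First I compute the Pareto gaps on this instance. Since every coordinate of $\mu_a$ equals $\mu_a^{(1)}$, we have $\mu_1=(\tfrac12+\Delta_{\mathrm{sc}})\mathbf{1}$ and $\mu_a=\tfrac12\mathbf{1}$ for $a\ge2$. Arm $1$ dominates every other arm. The point $\mu_a+\epsilon\mathbf{1}$ is dominated by $\mu_1$ exactly when $\epsilon<\Delta_{\mathrm{sc}}$; at $\epsilon=\Delta_{\mathrm{sc}}$ it equals $\mu_1$, and equality is not strict domination. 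Hence $\Delta_a^{\mathrm P}=\Delta_{\mathrm{sc}}$ for $a\ge2$ and $\Delta_1^{\mathrm P}=0$. Similarly $g_j=\Delta_{\mathrm{sc}}$ for every $j$, so $g^\dagger=\Delta_{\mathrm{sc}}$. It follows that
\[
R_T^{\mathrm P}=\Delta_{\mathrm{sc}}\sum_{a\ge2}N_a(T),
\]
which is exactly the scalar regret of the induced policy on the Bernoulli instance $\theta=(\tfrac12+\Delta_{\mathrm{sc}},\tfrac12,\dots,\tfrac12)$.

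Next I justify the reduction. Each observed vector is a deterministic, invertible function of the latent sample $Z_{a,n}$. So the MO policy and the induced scalar policy generate the same law of $(A_t)_t$. The hypothesis then says this scalar policy is uniformly good over all $\theta\in(0,1)^K$. Lai--Robbins (or the change-of-measure argument of Kaufmann et al., if one prefers a self-contained route) gives, for each suboptimal arm $a$,
\[
\liminf_{T\to\infty}\frac{\E[N_a(T)]}{\log T}\ge \frac{1}{\mathrm{kl}(\tfrac12,\tfrac12+\Delta_{\mathrm{sc}})}.
\]
Summing over the $K-1$ suboptimal arms and multiplying by $\Delta_{\mathrm{sc}}$ yields
\[
\liminf_{T\to\infty}\frac{\E[R_T^{\mathrm P}]}{\log T}\ge \frac{(K-1)\,\Delta_{\mathrm{sc}}}{\mathrm{kl}(\tfrac12,\tfrac12+\Delta_{\mathrm{sc}})}.
\]
If a self-contained proof is wanted, I would take the alternative instance that moves arm $a$'s mean to $\tfrac12+\Delta_{\mathrm{sc}}+\eta$. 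I would then use the divergence decomposition $\E[N_a]\,\mathrm{kl}\ge \mathrm{kl}(\Pr(E),\Pr'(E))$ with $E=\{N_a(T)\le T/2\}$, use uniform goodness to bound the probabilities on both instances, and let $\eta\to0$.

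The remaining, and main, step is the explicit constant. I need
\[
\mathrm{kl}(\tfrac12,\tfrac12+\Delta)\le \tfrac{8}{3}\Delta^2 \quad\text{for } \Delta\in(0,\tfrac14].
\]
I would bound KL by the $\chi^2$-divergence:
\[
\mathrm{kl}(p,q)\le \frac{(p-q)^2}{q(1-q)}=\frac{\Delta^2}{(\tfrac12+\Delta)(\tfrac12-\Delta)}=\frac{\Delta^2}{\tfrac14-\Delta^2}.
\]
On $\Delta\le\tfrac14$ we have $\tfrac14-\Delta^2\ge\tfrac3{16}$, which gives $\tfrac{16}{3}\Delta^2$. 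That is a factor $2$ too weak. The intended route is probably the sharper expansion
\[
\mathrm{kl}(\tfrac12,\tfrac12+\Delta)=-\tfrac12\log(1-4\Delta^2),
\]
which follows directly from the Bernoulli KL formula. Using $-\log(1-x)\le \frac{x}{1-x}$ with $x=4\Delta^2\le\tfrac14$ gives
\[
\mathrm{kl}\le \tfrac12\cdot\frac{4\Delta^2}{3/4}=\tfrac{8}{3}\Delta^2.
\]
Plugging this in gives $\frac{(K-1)\Delta_{\mathrm{sc}}}{(8/3)\Delta_{\mathrm{sc}}^2}=\tfrac38\frac{K-1}{\Delta_{\mathrm{sc}}}$, as claimed. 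Since $K-1\ge K/2$ and $g^\dagger=\Delta_{\mathrm{sc}}$, this is $\Omega(K\log T/g^\dagger)$.
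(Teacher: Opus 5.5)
Your proposal is correct and matches the paper's proof step for step: the same computation $\Delta_a^{\mathrm P}=\Delta_{\mathrm{sc}}$ giving $R_T^{\mathrm P}=\Delta_{\mathrm{sc}}\sum_{a\ge2}N_a(T)$ and $g^\dagger=\Delta_{\mathrm{sc}}$, the same appeal to Lai--Robbins' Theorem~2 for each suboptimal arm, and the same constant via $\mathrm{kl}(\tfrac12,\tfrac12+\Delta_{\mathrm{sc}})=-\tfrac12\log(1-4\Delta_{\mathrm{sc}}^2)\le\tfrac83\Delta_{\mathrm{sc}}^2$ using $-\log(1-x)\le x/(1-x)$. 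Your remark that the $\chi^2$ bound loses a factor of $2$ is accurate, and your final step $K-1\ge K/2$ makes explicit the passage to $\Omega(K\log T/g^\dagger)$, which the paper leaves implicit.
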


Proposition~\ref{prop:lower_bound} shows that multi-objective bandits contain instances with lower bound \(\Omega(K\nicefrac{\log T}{g^\dagger})\).  Therefore, no uniformly good policy can improve the order \(K\nicefrac{\log T}{g^\dagger}\) over the full model class. Together with Theorem~\ref{thm:regret}, this gives matching upper and lower bounds up to constant factors whenever \(g^\dagger>0\). The proof is deferred to Appendix~\ref{app:proofs}.

\subsection{Upper Bound Analysis}

\begin{theorem}[Finite-time bound for objective-wise Pareto-optimal certification]
\label{thm:regret}
Under the stochastic model above, suppose $K\ge 2$, $T\ge K$, rewards are bounded in $[0,1]$, and the instance has positive first-certificate scale $g^\dagger>0$. Let
\(
\Delta_{\max}^{\mathrm P} := \max_{a \in [K]} \Delta_a^{\mathrm P}.
\)
Then Algorithm~\ref{alg:ucb_lcb_guided} satisfies
\(
\E\!\left[R_T^{\mathrm P}\right]
\le
K\Delta_{\max}^{\mathrm P}
+
64\,\frac{K\log T}{g^\dagger}
+
2Kd\,\Delta_{\max}^{\mathrm P}T^{-2}.
\)
In particular,
\(
\E\!\left[R_T^{\mathrm P}\right]
=
O\!(\frac{K\log T}{g^\dagger}).
\)
\end{theorem}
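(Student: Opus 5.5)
The plan is to split on a high-probability good event on which every confidence interval is valid. On that event I will show two things: any certificate the algorithm issues is correct, and every pull made at a non-certifying round goes to an arm whose confidence radius is large relative to both \(g^\dagger\) and that arm's own Pareto gap. I will use Def.~\ref{ass:marginal_pareto_consistency} throughout; the theorem implicitly needs it so that a certified arm is Pareto-optimal. The first \(K\) warm-start rounds contribute at most \(K\Delta_{\max}^{\mathrm P}\).

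\emph{Good event.} Let \(G\) be the event that \(|\widehat\mu_a^{(j)}-\mu_a^{(j)}|\le\sqrt{2\log T/n}\) holds for every arm \(a\), objective \(j\), and sample count \(n\le T\). Here the empirical mean is taken over the first \(n\) samples of arm \(a\). By Hoeffding, each two-sided deviation has probability at most \(2T^{-4}\). A union bound over \(K\), \(d\) and \(n\le T\) gives \(\Pr(G^c)\le 2KdT^{-3}\). Since regret is at most \(T\Delta_{\max}^{\mathrm P}\), the contribution of \(G^c\) is at most \(2Kd\Delta_{\max}^{\mathrm P}T^{-2}\).

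\emph{Correctness of certification on \(G\).} Suppose \(L_{b}^{(j)}(t)>U_{c}^{(j)}(t)\), where \(b=b_j(t)\) and \(c=c_j(t)\). For every \(a\neq b\) we have \(U_a^{(j)}\le U_c^{(j)}\). On \(G\) this gives
\[
\mu_b^{(j)}\ge L_b^{(j)}>U_c^{(j)}\ge U_a^{(j)}\ge\mu_a^{(j)} .
\]
Hence \(b\) is the unique maximizer of objective \(j\), so \(b\in A^\star\) and \(\Delta_b^{\mathrm P}=0\). Consequently, after commitment no further regret accrues.

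\emph{Key step: lower bounds on the pulled radius.} Fix a non-certifying round \(t\) on \(G\) and the pair \((b,c)\) of any objective \(j\). Let \(\mu_{(1),j}\) denote the top mean, and let \(a'\) be any maximizer. Then \(U_b^{(j)}\ge U_{a'}^{(j)}\ge\mu_{(1),j}\). Non-separation of the pair gives
\[
\mu_{(1),j}\le U_b^{(j)}=L_b^{(j)}+2\beta_b\le U_c^{(j)}+2\beta_b\le\mu_c^{(j)}+2\beta_c+2\beta_b .
\]
It follows that \(\Delta_c^{(j)}\le 2W_j\). Also \(\mu_{(1),j}\le U_b^{(j)}\le\mu_b^{(j)}+2\beta_b\), so \(\Delta_b^{(j)}\le2\beta_b\le 2W_j\).

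For the pulled arm \(A_t\in\{b_t,c_t\}\) we have \(\beta_{A_t}\ge W_{j_t}/2\). Using \(\Delta_a^{\mathrm P}\le\min_j\Delta_a^{(j)}\), which holds because exceeding the top mean in any single coordinate prevents domination, we obtain \(\Delta_{A_t}^{\mathrm P}\le 2W_{j_t}\le 4\beta_{A_t}\).

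It remains to show \(W_{j_t}\ge W_{j^\dagger}\ge g^\dagger/2\). Apply the display with \(j=j^\dagger\), \(a'=a^\star_{j^\dagger}\), and split into two cases.
\begin{itemize}
\item If \(b=a^\star_{j^\dagger}\), then \(\mu_c^{(j^\dagger)}\le\mu^{(j^\dagger)}_{(1)}-g^\dagger\), so \(g^\dagger\le 2W_{j^\dagger}\).
\item If \(b\neq a^\star_{j^\dagger}\), then \(\mu^{(j^\dagger)}_{(1)}\le\mu_b^{(j^\dagger)}+2\beta_b\le\mu^{(j^\dagger)}_{(1)}-g^\dagger+2\beta_b\), so \(\beta_b\ge g^\dagger/2\).
\end{itemize}
Either way \(W_{j^\dagger}\ge g^\dagger/2\). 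Therefore every non-certifying pull of arm \(a\) satisfies
\[
\beta_a(t)\ge\max\{g^\dagger,\Delta_a^{\mathrm P}\}/4 .
\]
Squaring and using \(\max\{x,y\}^2\ge xy\) gives \(2\log T/N_a(t-1)\ge g^\dagger\Delta_a^{\mathrm P}/16\).

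\emph{Counting.} Every non-certifying pull of \(a\) therefore happens while \(N_a(t-1)\le 32\log T/(g^\dagger\Delta_a^{\mathrm P})\). So arm \(a\) receives at most that many such pulls beyond the warm start, and its contribution \(\Delta_a^{\mathrm P}N_a\) is at most \(32\log T/g^\dagger+\Delta_a^{\mathrm P}\). Summing over arms and absorbing the slack, using \(\Delta^{\mathrm P}\le1\le \log T/g^\dagger\) when \(T\ge3\), gives at most \(64K\log T/g^\dagger\) plus the warm-start term.

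The main obstacle is this key step. A naive argument only yields \(\beta\ge g^\dagger/4\), which gives a \(1/(g^\dagger)^2\) count. The crux is the bound \(\Delta_{A_t}^{\mathrm P}\le 4\beta_{A_t}\), obtained through the non-separation chain in whatever objective \(j_t\) is chosen. Combining it with the \(g^\dagger\) bound is what produces a \(1/g^\dagger\) rate instead.
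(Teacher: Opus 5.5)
Your proof is correct, and most of it matches the paper's: the same horizon-$T$ Hoeffding event with failure probability $2KdT^{-3}$ and the same certification step. You also use the same per-pull charge $\Delta^{\mathrm P}_{A_t}\le 4\beta_{A_t}(t)$ as Lemma~\ref{lem:local_charge}. Your single non-separation chain, valid for every objective at a non-certifying round, is a tidier derivation than the paper's case split on which endpoint is pulled and whether $b_t$ is a true maximizer. Finally, you use the same width floor $W_{j^\dagger}(t)\ge g^\dagger/2$, with the same two cases as Lemma~\ref{lem:width_floor}.

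The genuine difference is the final accounting. The paper uses only $\beta_{A_t}(t)\ge g^\dagger/4$, which gives the count $m_a\le 32\log T/(g^\dagger)^2$. It then charges each pull its own radius: $4\sum_{s=1}^{m_a}\sqrt{2\log T/s}\le 8\sqrt{2\log T\,m_a}\le 64\log T/g^\dagger$ per arm. So the $1/(g^\dagger)^2$ count you call naive is not a dead end; the square-root summation of radii is what turns it into $1/g^\dagger$.

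You instead merge the two lower bounds pointwise: $\beta_a(t)^2\ge \max\{g^\dagger,\Delta^{\mathrm P}_a\}^2/16\ge g^\dagger\Delta^{\mathrm P}_a/16$. This gives an arm-dependent count $32\log T/(g^\dagger\Delta^{\mathrm P}_a)$, and then the classical gap-times-count bound $32\log T/g^\dagger$ per arm. Your route halves the leading constant to $32$ and yields a per-arm pull bound that shrinks with the arm's own Pareto gap. The paper's route bounds the count uniformly across arms and never refers to individual gaps.

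Two small corrections. First, you do not need Definition~\ref{ass:marginal_pareto_consistency}, and the theorem does not assume it. Your own certification step shows $b$ is the \emph{unique} maximizer of objective $j$, and an arm strictly best on some coordinate cannot be dominated, so $b\in A^\star$ unconditionally. Lemma~\ref{lem:certify} argues exactly this way, so drop the extra hypothesis rather than add it.

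Second, the closing absorption step is superfluous. On the good event you already have $R^{\mathrm P}_T\le K\Delta^{\mathrm P}_{\max}+32K\log T/g^\dagger$, which is below the target directly. You therefore need neither the inequality $1\le \log T/g^\dagger$ nor a second warm-start term. That inequality is only valid for $T\ge 3$, and it can fail in the allowed case $T=K=2$.
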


The theorem is a guarantee for the objective-wise Pareto-optimal certificate family, not a characterization of all Pareto geometries. When \(g^\dagger>0\), this route supplies a statistically separated certificate and Algorithm~\ref{alg:ucb_lcb_guided} reaches the single-objective-like scale above. When \(g^\dagger=0\), Pareto regret remains well defined, but this particular objective-wise route has no positive separation. %The present theorem does not analyze broader certificates based on multi-coordinate non-dominance witnesses. 
%\textcolor{blue}{MX: meaning of ? multi-coordinate non-dominance witnesses}
%\textcolor{orange}{Already deleted that vague sentence. I was trying to say we are not solving $g^\dagger=0.$}

The key feature of Theorem~\ref{thm:regret} lies in its leading term: apart from the displayed bad-event tail \(2Kd\Delta_{\max}^{\mathrm P}T^{-2}\), there is no explicit multiplicative dependence on the number of objectives, and the first-certificate scale $g^\dagger$ controls the finite-time upper bound for the objective-wise route used by Algorithm~\ref{alg:ucb_lcb_guided}. The dimension dependence in the theorem is therefore confined to a vanishing confidence-failure term, not to the leading exploration cost. Operationally, $g^\dagger$ is not a relaxation of the Pareto benchmark; it is the gap governing the earliest objective-wise certificate after which the same benchmark charges zero regret. Thus, for gap-dependent Pareto regret under selected-arm vector feedback, the multi-objective problem remains single-objective-like at leading order when the first certificate has positive separation.

\begin{remark}[Comparison with Pareto UCB1]
\label{rem:compare_pucb}
Theorem~1 of \citep{drugan2013designing} gives
\(\E[R_T^{\mathrm P}]
\le
\sum_{i\notin A^\star}
\frac{8\log\!(T(d|A^\star|)^{1/4})}{\Delta_i^{\mathrm P}}
+
(1+\frac{\pi^2}{3})\sum_{i\notin A^\star}\Delta_i^{\mathrm P}\).
Consider the generic non-boundary case in which every dominated arm has
positive Pareto gap; zero-gap dominated arms are regret-neutral, and if
\(A^\star=[K]\), the comparison is trivial. Let
\(\Delta_{\min}^{\mathrm P}:=\min_{i\notin A^\star}\Delta_i^{\mathrm P}\).
Then the leading term is at most
\(8(K-|A^\star|)
\log\!(T(d|A^\star|)^{1/4})/\Delta_{\min}^{\mathrm P}\), which can be
written on the scale of Theorem~\ref{thm:regret} as
\((8\cdot \frac{K-|A^\star|}{K}
\cdot \frac{g^\dagger}{\Delta_{\min}^{\mathrm P}}
\cdot
\frac{\log\!(T(d|A^\star|)^{1/4})}{\log T})
\frac{K\log T}{g^\dagger}\).
Thus, the Pareto-UCB1 leading term is organized through all dominated-arm
Pareto gaps and retains \(d\) and \(|A^\star|\) inside the logarithm, whereas
our bound reaches the first-certificate scale \(g^\dagger\). The full
derivation is given in Appendix~\ref{app:comparison_pucb}.

For an exact leading-term comparison, write the Pareto-UCB1 leading term as
\(C_{\mathrm{PUCB}}(T,\mu)K\log T/g^\dagger\), where
\(C_{\mathrm{PUCB}}(T,\mu)
:=
8\cdot \frac{g^\dagger}{K}
(\sum_{i\notin A^\star}\frac{1}{\Delta_i^{\mathrm P}})
\frac{\log\!(T(d|A^\star|)^{1/4})}{\log T}\).
Our coefficient \(64\) is smaller exactly when
\(\frac{g^\dagger}{K}
\sum_{i\notin A^\star}\frac{1}{\Delta_i^{\mathrm P}}
>
8\cdot
\frac{\log T}{\log\!(T(d|A^\star|)^{1/4})}\).
Since
\(\frac{\log T}{\log\!(T(d|A^\star|)^{1/4})}
=
(1+\frac{\log(d|A^\star|)}{4\log T})^{-1}\),
the right-hand side is asymptotically close to \(8\). Hence, the comparison is
driven by how the dominated-arm Pareto gaps relate to \(g^\dagger\). If
\(\Delta_{\min}^{\mathrm P}\ll g^\dagger\), or more generally if
\(\sum_{i\notin A^\star}1/\Delta_i^{\mathrm P}\) is large, the Pareto-UCB1
coefficient can be much larger than \(64\). This is precisely the regime our
method targets: one objective can be certified quickly even when many dominated
arms lie close to the Pareto frontier. If the dominated-arm Pareto gaps are all
of the same order as \(g^\dagger\) and only a few dominated arms remain, then
the Pareto-UCB1 coefficient remains constant-order and may be smaller than
\(64\). This comparison concerns only the leading \(K\log T/g^\dagger\) term,
not the lower-order additive terms.
\end{remark}

\paragraph{Proof Intuition.}

We derive the above upper bound using a new analytical framework. The proof
differs from Pareto-UCB analyses in how regret is accounted for: Pareto UCB
controls regret by bounding the number of times each dominated arm remains an
optimistic Pareto candidate, whereas we focus only on pulls made before the
first Pareto-optimal-arm certificate along the objective with the maximum sub-optimality gap. The bound rests on a single global
confidence event and three deterministic implications on that event:\\
$\qquad$ $\blacktriangleright$ certification stops regret because the certified arm has zero Pareto gap;\\
$\qquad$ $\blacktriangleright$ each non-certifying pull incurs only uncertainty-scale Pareto regret;\\
$\qquad$ $\blacktriangleright$ non-certifying play cannot continue indefinitely as the champion
objective enforces a width floor.

More specifically, Lemma~\ref{lem:certify} shows how a single-coordinate
statistical guarantee controls Pareto regret: a certified unique maximizer of a
single objective already belongs to \(A^\star\). Lemma~\ref{lem:local_charge}
then attributes each non-certifying pull to the confidence radius of the sampled
endpoint in an objective-level top-two race, rather than tracking the arm's role
within an estimated Pareto set. Finally, Lemma~\ref{lem:width_floor} shows that
the easiest separated objective keeps these races active only until the relevant
radii reach the \(g^\dagger\) scale. Thus, the proof focuses entirely on the
certificate witness rather than resolving the Pareto status of individual arms:
we never need to show that every positive-gap arm eventually leaves an
optimistic Pareto set. Once the first certificate is found, the remaining arms
no longer affect Pareto regret. Together, these ingredients attribute all pre-certification regret to confidence
radii, while the width floor converts these terms into a total
\(O(K\log T/g^\dagger)\) bound. The proof is deferred to
Appendix~\ref{app:proofs}.

Beyond its Pareto regret guarantee, the method also identifies a Pareto-optimal
arm through certification-style committed arm selection. This property is not
guaranteed by Pareto UCB, which randomly selects an arm from the estimated
Pareto front. The formal statement is as follows:

\begin{corollary}[High-probability validity of accepted certificates]
\label{cor:certificate_validity}
Run Algorithm~\ref{alg:ucb_lcb_guided} with the theorem-matched confidence radius
\(
\beta_a(t)=\sqrt{\nicefrac{2\log T}{N_a(t-1)}}.
\)
Let \(\widehat a_{\mathrm{cert}}(T)\in[K]\cup\{\emptyset\}\) denote the stored certified leader at the end of horizon \(T\), with \(\widehat a_{\mathrm{cert}}(T)=\emptyset\) if no certificate has been stored. Then
\(
\Pr\!(
\widehat a_{\mathrm{cert}}(T)\neq\emptyset
\ \text{and}\
\widehat a_{\mathrm{cert}}(T)\notin A^\star
)
\le 2Kd\,T^{-3}.
\)
Equivalently, with probability at least \(1-2KdT^{-3}\), every arm accepted by the algorithm is Pareto-optimal.
\end{corollary}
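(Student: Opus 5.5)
The plan is to reduce the corollary to a single global confidence event and a deterministic implication on that event. Define
\[
\mathcal{G}:=\Bigl\{\forall a\in[K],\ \forall j\in[d],\ \forall n\in[T]:\ \bigl|\widehat{\mu}_{a,n}^{(j)}-\mu_a^{(j)}\bigr|\le \sqrt{\nicefrac{2\log T}{n}}\Bigr\},
\]
where \(\widehat{\mu}_{a,n}^{(j)}\) is the empirical mean of the first \(n\) i.i.d.\ samples \(Y_{a,1},\dots,Y_{a,n}\) of arm \(a\). Indexing by sample count instead of by round removes the dependence on the adaptive pull counts. At any round \(t>K\), the algorithm's estimate \(\widehat{\mu}_a^{(j)}(t-1)\) equals \(\widehat{\mu}_{a,N_a(t-1)}^{(j)}\) with \(1\le N_a(t-1)\le T\). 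Hence on \(\mathcal{G}\) we have \(L_a^{(j)}(t)\le \mu_a^{(j)}\le U_a^{(j)}(t)\) for every \(t,a,j\) simultaneously.

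\textbf{Probability of the bad event.} For each fixed \((a,j,n)\), Hoeffding's inequality for \([0,1]\)-valued variables gives
\[
\Pr\Bigl(\bigl|\widehat{\mu}_{a,n}^{(j)}-\mu_a^{(j)}\bigr|>\sqrt{\nicefrac{2\log T}{n}}\Bigr)\le 2\exp(-4\log T)=2T^{-4}.
\]
A union bound over \(n\le T\), \(a\in[K]\), and \(j\in[d]\) then yields \(\Pr(\mathcal{G}^c)\le 2KdT^{-3}\), which is exactly the claimed rate. I expect this to be the same event used in the proof of Theorem~\ref{thm:regret}, up to how its tail is weighted there.

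\textbf{Deterministic step on \(\mathcal{G}\).} Suppose a certificate is stored at round \(t\) via objective \(\widehat j\), with \(\widehat a=b_{\widehat j}(t)\) and \(L_{\widehat a}^{(\widehat j)}(t)>U_{c_{\widehat j}(t)}^{(\widehat j)}(t)\). By the definition of \(c_{\widehat j}(t)\) as the runner-up in \(U^{(\widehat j)}\), every \(a\ne\widehat a\) satisfies \(U_a^{(\widehat j)}(t)\le U_{c_{\widehat j}(t)}^{(\widehat j)}(t)\). On \(\mathcal{G}\) this gives, for every \(a\ne \widehat a\),
\[
\mu_{\widehat a}^{(\widehat j)}\ge L_{\widehat a}^{(\widehat j)}(t)>U_{c_{\widehat j}(t)}^{(\widehat j)}(t)\ge U_a^{(\widehat j)}(t)\ge \mu_a^{(\widehat j)}.
\]
So \(\widehat a\) is the unique maximizer of objective \(\widehat j\), and in particular \(\Delta_{\widehat a}^{(\widehat j)}=0\). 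Strict uniqueness makes Pareto optimality immediate even without invoking Def.~\ref{ass:marginal_pareto_consistency}: no arm \(b\) can have \(\mu_b\succeq\mu_{\widehat a}\), since that would force \(\mu_b^{(\widehat j)}\ge\mu_{\widehat a}^{(\widehat j)}\). Hence \(\widehat a\in A^\star\). This is presumably the content of Lemma~\ref{lem:certify}, which I would cite directly if it is stated in this form. Since the stored arm is never changed afterwards, \(\widehat a_{\mathrm{cert}}(T)\) is this arm.

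\textbf{Conclusion.} The event \(\{\widehat a_{\mathrm{cert}}(T)\ne\emptyset,\ \widehat a_{\mathrm{cert}}(T)\notin A^\star\}\) is contained in \(\mathcal{G}^c\), which gives the bound \(2KdT^{-3}\). The only delicate point is the adaptivity of \(N_a(t-1)\). The "first \(n\) samples" (stack-of-rewards) construction in the problem formulation handles it, because it lets the union bound range over deterministic sample counts \(n\le T\) rather than random ones. No assumption on \(g^\dagger\) is needed, since the corollary concerns only the correctness of an accepted certificate and not whether one is ever accepted.
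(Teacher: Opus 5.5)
Your proposal is correct and takes essentially the same route as the paper. The paper works on the global event $\Omega_T$ of Proposition~\ref{prop:confidence}, whose proof bounds $\Omega_T^c$ by the same Hoeffding-plus-union bound over deterministic sample counts $n\le T$ (so your $\mathcal{G}\subseteq\Omega_T$). It then applies Lemma~\ref{lem:certify}, which is your deterministic chain showing the stored leader is the strict unique maximizer of objective $\widehat j$ and hence undominated, without needing Def.~\ref{ass:marginal_pareto_consistency} or $g^\dagger>0$.
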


This corollary provides the theoretical analogue of the Pareto-optimal-arm
detection metric in Table~\ref{tab:real-main}, which is further validated empirically.
 %The table reports a validation-tuned empirical implementation and uses a terminal recommendation rule, whereas Corollary~\ref{cor:certificate_validity} applies only to the stored certificate under the conservative theorem-matched radius.

%Section~\ref{sec:computational} examines these comparisons computationally at finite horizons.

\section{Computational Study}
\label{sec:computational}

Our empirical goal is to evaluate finite-horizon performance from two
complementary angles. The real-data benchmark tests outcome metrics under
realistic multi-objective reward distributions and compares the method against
widely used baselines, while the synthetic study provides a stylized validation
of the regret bound through comparison with Pareto-UCB1; we present the
synthetic results in Appendix~\ref{app:synthetic}.

Theoretically, we have validated that Pareto regret is no harder than classical
regret. In addition, the guarantee of identifying an optimal arm also yields the
identification of a Pareto-optimal arm as a byproduct. To the best of our
knowledge, fairness in general multi-objective bandits is primarily an
empirical measure without a formal characterization. Motivated by this, we also
examine this measure numerically. Thus, we report three measures: Pareto regret,
detection of a Pareto-optimal arm, and empirical fairness. The results not only
confirm and align with the theoretical message of the paper, but also reveal
surprising insights and identify a potential hardness associated with distinct
learning goals. We provide additional elaboration in
Appendix~\ref{app:computational}.
%\subsection{Real-World Benchmark: Regret, Detection, and Fairness}

\paragraph{Real-world Datasets.} We evaluate our approach using two real-world datasets. In the Spotify instance, the arms represent music genres, and each pull samples a track from the selected genre. The six objectives correspond to popularity, danceability, energy, loudness, acousticness, and valence. In the KuaiRec instance, the arms are videos, and each objective represents a specific user cohort. The reward is defined as a normalized watch-ratio sample for that particular cohort. The prepared instances contain 114 Spotify arms with $d=6$ objectives and 300 KuaiRec arms with $d=26$ objectives. For both datasets, each evaluated benchmark subset consists of $K=10$ arms. The Spotify instance is built from the Spotify Tracks dataset on Hugging Face~\citep{spotifyTracksDataset}; the dataset card for the mirror used in our scripts lists license \texttt{bsd}. The KuaiRec instance uses KuaiRec 2.0~\citep{gao2022kuairec}; the release used here includes a Creative Commons Attribution-ShareAlike 4.0 International license. 

We select these subsets prior to policy simulation and hold them fixed throughout the evaluation. We categorize the subsets into three empirical-geometry groups. First, \emph{misleading near-front} subsets contain attractive arms that appear close to the Pareto front but actually possess a positive Pareto gap. Second, \emph{friendly near-front} subsets feature near-front arms that have zero or nearly zero gaps, making exploratory mistakes relatively cheap. Finally, \emph{easy separated} subsets offer clear separation with few near-front distractors. We apply the geometry selector directly to the Spotify data, using an independently frozen geometry-selected easy-separated suite for the Spotify easy row. For KuaiRec, we use the same categorizations but employ a contamination-balanced selector to emphasize empirical-front contamination within the geometry groups.

\paragraph{Baselines.} We compare our method against three widely recognized baselines that are commonly used in practice. Pareto UCB1~\citep{drugan2013designing} forms coordinate-wise UCB vectors and samples uniformly from the nondominated set of these optimistic vectors. Annealing-Pareto~\citep{yahyaa2014annealing} uses an \(\epsilon\)-Pareto candidate rule with the reported decay \(\epsilon_s=0.4^s/(Kd)\). Scalarized UCB applies scalar UCB to a fixed finite set of normalized nonnegative linear weights.

\paragraph{Performance Metrics and Implementation.} Pareto regret is defined as before. The terminal recommendation, denoted as $\widehat a_T$, is defined as the arm most frequently played during the final 20\% of rounds. The Pareto set $A^\star$ is derived from the empirical mean vectors of the held-out subset. Additionally, fairness is measured as the cumulative max-min regret:
$
\sum_{t=1}^T
(
\max_a \min_j \mu_a^{(j)}
-
\min_j \mu_{A_t}^{(j)}
).
$ Reported
results are held-out averages over a horizon of \(T=10^6\). The only hyperparameter we tune is this empirical confidence coefficient,
because the theoretical radius is intentionally conservative for an
\emph{upper bound}, rather than for the \emph{actual regret} observed in
numerical experiments. For the baselines, we use the best parameter settings
recommended in their original papers for a fair comparison. Further
implementation and experiment details are available in
Appendix~\ref{app:computational}. The anonymous code and computational artifact are available at
\url{https://anonymous.4open.science/r/momab-width-guided-certification-ACF7}.

\begin{table}[!t]
\centering
\caption{Real-world benchmark on held-out Spotify and KuaiRec subsets at $T=10^6$. Subsets are grouped before policy simulation. The main target is Pareto regret and detection of at least one Pareto-optimal arm; fairness is reported as an empirical measure.}
\label{tab:real-main}
\scriptsize
\setlength{\tabcolsep}{3pt}
\resizebox{\textwidth}{!}{%
\begin{tabular}{llccc}
\toprule
Subset type & Method
& Pareto regret $\downarrow$
& $\Pr(\widehat a_T\in A^\star)\uparrow$
& Fairness regret $\downarrow$ \\
\midrule
\multicolumn{5}{l}{\textbf{Spotify d6}} \\
Misleading near-front & Width-guided & $\mathbf{3.05\pm1.99}$ & $\mathbf{1.000}$ & $1.27\times 10^{5} \pm 8.86\times 10^{4}$ \\
Misleading near-front & Pareto UCB1 & $2584.77\pm60.31$ & $\mathbf{1.000}$ & $1.67\times 10^{5} \pm 1.10\times 10^{4}$ \\
Misleading near-front & Annealing-Pareto & $337.59\pm1504.82$ & 0.983 & $1.37\times 10^{5} \pm 2.53\times 10^{4}$ \\
Misleading near-front & Scalarized UCB & $1435.03\pm258.58$ & $\mathbf{1.000}$ & $\mathbf{1.07}\times 10^{5} \pm \mathbf{2.14}\times 10^{4}$ \\
\midrule
Friendly near-front & Width-guided & $\mathbf{0.12\pm0.07}$ & $\mathbf{1.000}$ & $\mathbf{1.40}\times 10^{5} \pm \mathbf{9.26}\times 10^{4}$ \\
Friendly near-front & Pareto UCB1 & $212.44\pm61.38$ & 0.967 & $2.16\times 10^{5} \pm 2.44\times 10^{4}$ \\
Friendly near-front & Annealing-Pareto & $26.81\pm71.75$ & 0.983 & $2.05\times 10^{5} \pm 3.62\times 10^{4}$ \\
Friendly near-front & Scalarized UCB & $65.10\pm39.01$ & $\mathbf{1.000}$ & $1.55\times 10^{5} \pm 2.62\times 10^{4}$ \\
\midrule
Easy separated & Width-guided & $\mathbf{0.69\pm0.43}$ & $\mathbf{1.000}$ & $2.23\times 10^{5} \pm 5.89\times 10^{4}$ \\
Easy separated & Pareto UCB1 & $1916.61\pm57.93$ & $\mathbf{1.000}$ & $1.83\times 10^{5} \pm 4.15\times 10^{4}$ \\
Easy separated & Annealing-Pareto & $408.06\pm1935.09$ & 0.983 & $1.79\times 10^{5} \pm 4.64\times 10^{4}$ \\
Easy separated & Scalarized UCB & $1198.16\pm88.62$ & $\mathbf{1.000}$ & $\mathbf{1.25}\times 10^{5} \pm \mathbf{2.97}\times 10^{4}$ \\
\midrule
\multicolumn{5}{l}{\textbf{KuaiRec d26}} \\
Misleading near-front & Width-guided & $\mathbf{8.00\pm4.41}$ & $\mathbf{1.000}$ & $2.85\times 10^{4} \pm 3.76\times 10^{4}$ \\
Misleading near-front & Pareto UCB1 & $4772.64\pm1766.07$ & $\mathbf{1.000}$ & $5.03\times 10^{4} \pm 6.82\times 10^{3}$ \\
Misleading near-front & Annealing-Pareto & $13.36\pm8.80$ & $\mathbf{1.000}$ & $2.80\times 10^{4} \pm 2.04\times 10^{4}$ \\
Misleading near-front & Scalarized UCB & $2158.68\pm585.78$ & $\mathbf{1.000}$ & $\mathbf{1.95}\times 10^{4} \pm \mathbf{4.95}\times 10^{3}$ \\
\midrule
Friendly near-front & Width-guided & $\mathbf{0.19\pm0.14}$ & $\mathbf{1.000}$ & $\mathbf{1.67}\times 10^{4} \pm \mathbf{2.53}\times 10^{4}$ \\
Friendly near-front & Pareto UCB1 & $200.54\pm44.32$ & 0.950 & $4.67\times 10^{4} \pm 1.76\times 10^{4}$ \\
Friendly near-front & Annealing-Pareto & $3.03\pm6.95$ & $\mathbf{1.000}$ & $4.22\times 10^{4} \pm 1.66\times 10^{4}$ \\
Friendly near-front & Scalarized UCB & $47.52\pm7.27$ & $\mathbf{1.000}$ & $2.27\times 10^{4} \pm 4.91\times 10^{3}$ \\
\midrule
Easy separated & Width-guided & $\mathbf{7.53\pm3.08}$ & $\mathbf{1.000}$ & $\mathbf{0.012}\times 10^{3} \pm \mathbf{0.005}\times 10^{3}$ \\
Easy separated & Pareto UCB1 & $2855.97\pm455.90$ & $\mathbf{1.000}$ & $6.33\times 10^{3} \pm 2.02\times 10^{3}$ \\
Easy separated & Annealing-Pareto & $8.66\pm5.15$ & $\mathbf{1.000}$ & $0.015\times 10^{3} \pm 0.009\times 10^{3}$ \\
Easy separated & Scalarized UCB & $1789.05\pm175.84$ & $\mathbf{1.000}$ & $3.31\times 10^{3} \pm 0.538\times 10^{3}$ \\
\bottomrule
\end{tabular}%
}
\end{table}

\iffalse
\textcolor{blue}{MX: it seems that our fairness is not always the worst, would it be possible to bold the places where our fairness is smallest (also bold pareto arm identification), and then explain in sentences, saying that we may sacrifice fairness but in some cases may achieves best of all worlds in terms all three metrics. This may also motivate future focus on fairness, which may reflect a new hardness.}
\textcolor{orange}{I think we kind of said that in the Experiment Results section so I guess we can safely remove my next orange sentence?}
\textcolor{orange}{Boldface in the detection column marks all methods tied for the largest terminal detection rate, and boldface in the fairness column marks the smallest average fairness within each dataset--subset block. The results show that Width-guided does not uniformly sacrifice fairness. In Spotify misleading, Spotify easy, and KuaiRec misleading, it has the smallest Pareto regret and perfect terminal detection, but a scalarized policy has smaller fairness. In Spotify friendly, KuaiRec friendly, and KuaiRec easy, Width-guided is best or tied-best on all three reported metrics. These mixed outcomes suggest that fairness introduces a separate learning goal, and understanding when it aligns with Pareto regret is an important direction for future work.}
\fi 

\paragraph{Experiment Results.} Table~\ref{tab:real-main} reports the full results.  Across these held-out real-world subsets, Width-guided achieves the lowest Pareto regret in every Spotify and KuaiRec group under the stated hyperparameter protocol. The performance gap is most pronounced in the misleading near-front and easy separated groups. In these settings, repeatedly playing positive-gap arms proves costly for methods that attempt to maintain or exploit a broader empirical front. While the friendly groups also show favorable results for our method, the interpretation differs slightly. Because near-front mistakes in these subsets are cheap, the absolute scale of regret remains small across all algorithms. Measured against the best non-Width-guided Pareto-regret mean in each block, Width-guided is lower by about two or more orders of magnitude in all three Spotify groups (\(\mathbf{111\times}\), \(\mathbf{223\times}\), and \(\mathbf{591\times}\)),
by more than one order of magnitude in KuaiRec friendly
\(\mathbf{16\times}\), and by smaller factors in KuaiRec misleading and
KuaiRec easy (\(\mathbf{1.7\times}\) and \(\mathbf{1.2\times}\)).  Thus the main pattern is order-of-magnitude regret reduction in four of six groups, with two closer KuaiRec comparisons. These scale differences are intrinsic to the dataset structures themselves; the performance gap is especially large in the Spotify groups and in KuaiRec friendly, while KuaiRec misleading and KuaiRec easy show closer comparisons. Our algorithm applies regardless and can actually identify such structures based on the results. Furthermore, Width-guided has terminal detection frequency \(1.000\) in every reported group, so the low-regret rows also select \(\widehat a_T\in A^\star\) at the terminal recommendation stage.

The fairness column highlights the complementary nature of these learning
objectives. In some instances (e.g., Spotify misleading, Spotify easy, and KuaiRec misleading), although Scalarized UCB suffers from
substantially larger Pareto regret, it achieves smaller max-min fairness. This
behavior is consistent with the role of first certification: our approach is
designed to minimize Pareto regret, rather than to ensure full-front
coverage or fairness-style balancing. In other instances (e.g., Spotify friendly, KuaiRec friendly, and KuaiRec easy), however, our method is \emph{best-of-all-worlds on all three reported metrics}: Pareto regret, terminal Pareto-optimal-arm detection, and fairness. 

Taken together, these empirical results highlight the practical utility of our
method, besides complementing the theoretical guarantees. They also
suggest that fairness deserves formal study and may reflect a new source of
hardness, rather than being treated only as an empirical measure.

\bibliographystyle{abbrv}
\bibliography{refs}

\newpage

\appendix

\section{Technical appendices and supplementary material}
%\textcolor{blue}{MX: an outline like an outline in a book?}

The appendix is organized as follows.

\begin{enumerate}
    \item \textbf{Appendix~\ref{sec:related}: Related Work.}
    Supports Section~\ref{sec:introduction}.

    \item \textbf{Appendix~\ref{app:synthetic}: Synthetic Experiments.}
    Supports Section~\ref{sec:computational}.

    \item \textbf{Appendix~\ref{app:computational}: Reproducibility Details.}
    Provides implementation details for Section~\ref{sec:computational}.

    \item \textbf{Appendix~\ref{app:theory}: Additional Theory.}
    Provides additional theoretical statements for Section~\ref{sec:results}.

    \item \textbf{Appendix~\ref{app:proofs}: Proofs.}
    Contains the proofs of the results in Section~\ref{sec:results}.
\end{enumerate}

\subsection{Related Work}
\label{sec:related}

\textbf{Stochastic Multi-objective Bandits.} Multi-objective multi-armed bandits (MO-MAB) generalize the classical MAB framework by replacing the scalar reward with a vector-valued reward, which substantially complicates both algorithm design and performance analysis. A natural starting point is to extend confidence-bound methods from MAB to the multi-objective setting. In this direction, Pareto UCB~\citep{drugan2013designing} maintains coordinate-wise upper confidence bounds for each arm and identifies the estimated Pareto front rather than a single best arm; an arm is then selected uniformly at random from this set. Using a multi-dimensional Chernoff--Hoeffding bound, Pareto UCB achieves logarithmic Pareto regret, and subsequent refinements further improve its performance~\citep{drugan2014pareto}. However, it is shown that the regret is monotonically increasing in the dimensionality, which is reversed by the finding in this paper. Indeed, the key quantity is the suboptimality gap, and Pareto regret can actually be minimized by the minimum marginal regret, rather than necessarily growing with the dimensionality. Other related variants have also been studied for stochastic MO-MAB, including PF-LEX, which incorporates lexicographic order and satisficing objectives~\citep{huyuk2021multi}, whereas we consider the standard Pareto order relation.

A different stream of related work is rooted in the multi-objective optimization (MOO) literature. A number of algorithms developed in that area have demonstrated strong empirical effectiveness in practical settings, including evolutionary methods for large-scale problems that explicitly navigate the exploration--exploitation trade-off~\citep{hong2021evolutionary}, as well as Annealing Pareto Knowledge Gradient, which resembles a Thompson-sampling-type strategy~\citep{yahyaa2014annealing}. Despite these promising empirical results, such methods generally do not come with regret guarantees, which limits their applicability in online learning settings where theoretical reliability is important. Another common strategy inspired by MOO is to apply scalarization, for example through linear or Chebyshev functions, so that the original vector-valued problem is converted into a standard scalar bandit problem. Although this simplification is often computationally convenient, it may obscure important structure in the multi-objective reward and tends to depend heavily on the specific choice of scalarization. Closely related is the line of work that optimizes the General Gini Index (GGI) via online convex optimization~\citep{busa2017multi}; however, the associated regret is measured relative to GGI, rather than with respect to Pareto performance.

\textbf{Contextual Multi-objective Bandits.} For \emph{contextual} multi-objective bandits, the literature has mostly focused on the linear contextual bandit case, where the reward vector is a linear function of contexts or feature vectors. One representative example is MO-LinUCB~\citep{mehrotra2020bandit}, which extends linear contextual bandits to vector-valued rewards. This line of work highlights that, once context is introduced, the challenge is no longer only to learn the Pareto structure across arms, but also to adapt that structure to context-dependent reward models. Recent exceptions include the contextual case in \citep{turgay2018multi}, where the linear function is replaced by a similarity distance, meaning that the reward mean is Lipschitz continuous in the context, and \citep{wanigasekara2019learning}, where this assumption is removed but the theoretical guarantee is also unknown, as well as neural bandits~\citep{cheng2025multi}. We focus on the classical stochastic bandit case, while uncovering the effect of dimensionality and achieving optimal regret in that setting as well.

\textbf{Adversarial Multi-objective Bandits.} Recently, adversarial multi-objective bandits have attracted increasing attention because they generalize naturally to non-stochastic reward vectors. \citep{xu2023pareto} introduces the first formulation of Pareto regret that directly measures the one-shot distance between the cumulative received reward vector and the Pareto front induced by the cumulative reward vectors of all arms. They show the surprising result that this notion of Pareto regret is smaller than any marginal regret, and then propose a best-of-both-worlds algorithm that works in both stochastic and adversarial settings. Notably, our notion of regret differs from theirs: we consider cumulative distance rather than one-shot distance, so their results do not directly apply here. In addition, we establish an algorithm for achieving minimum marginal regret, whereas their best-of-both-worlds guarantee is only with respect to \(T\), not the dimension. We emphasize that the subtle connection they uncover between their notion of Pareto regret and marginal regret is what inspires our approach here, although the settings and regret notions differ substantially, as described above.

\textbf{Multi-objective Optimization.} In a broader class of multi-objective decision problems under uncertainty, one common approach is to encode preferences through utility functions, whether linear or nonlinear, as is often done in economics. Conceptually, these methods serve a similar purpose to scalarization by converting vector-valued outcomes into a single evaluative criterion. A more recent direction is the relative minimax regret framework of \citep{groetzner2022multiobjective}, which comes with theoretical guarantees. Their method evaluates each dimension against the worst-case realization of uncertainty and then optimizes decisions according to Pareto dominance. Despite its theoretical appeal, the framework requires all optimal values under the possible realizations of uncertainty to be computed in advance, making it difficult to adapt to online bandit problems where decisions must be taken sequentially without such offline precomputation. Indeed, there is classical work on using online bandits to solve MOO~\citep{huang2024direct}. Another relevant line of work is offline multi-objective bandits~\citep{cheng2026offline}, which is designed for the offline setting, whereas our focus is online, highlighting the main difference.

\subsection{Numerical Results on Synthetic Datasets}\label{app:synthetic}

We also include a controlled synthetic family to isolate the mechanism behind the comparison with Pareto UCB1.  This experiment uses the same horizon and Monte Carlo budget as the real-data benchmark: \(T=10^6\) and \(20\) repetitions for every setting.  Unlike the real-data Width-guided implementation in Table~\ref{tab:real-main}, the synthetic experiment uses the theorem-matched confidence radius \(\sqrt{2\log T/N_a(t-1)}\), with no empirical coefficient tuning.

Each synthetic instance has \(K=20\) arms and \(d=2\) objectives.  Two arms form the Pareto front, with means \((p+g,p)\) and \((p,p+g)\), where \(p=0.25\) and \(g=0.55\).  A crowd of \(m\) dominated arms has mean \((p-\eta,p+g-\delta)\), with \(\eta=0.20\), and the remaining arms are low-reward fillers.  Thus the first-certification scale remains fixed at \(g^\dagger=0.55\), while the Pareto-UCB burden grows either by decreasing the dominated-arm Pareto gap \(\delta=\Delta_{\min}^{\mathrm P}\) or by increasing the number \(m\) of near-front dominated arms.

\begin{figure}[t]
\centering
\includegraphics[width=\textwidth]{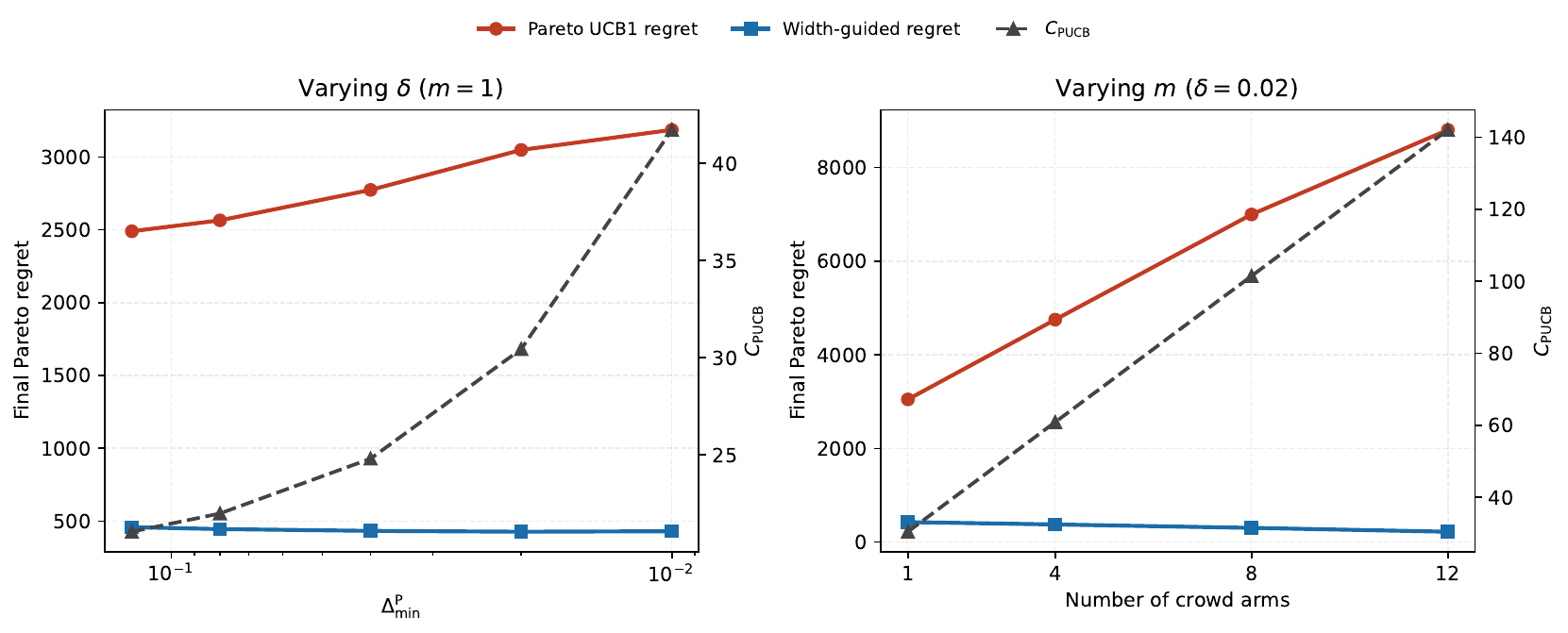}
\caption{Synthetic controlled family at \(T=10^6\) with \(20\) repetitions.  The first panel decreases the Pareto gap \(\delta\) with one near-front dominated arm; the second increases the number \(m\) of near-front dominated arms at fixed \(\delta=0.02\).  The first-certification scale \(g^\dagger=0.55\) is fixed throughout.}
\label{fig:synthetic-comparison}
\end{figure}

\begin{table}[t]
\centering
\caption{Synthetic family with fixed certification gap and small Pareto gaps at \(T=10^6\); \(g^\dagger=0.55\) throughout and \(\delta=\Delta_{\min}^{\mathrm P}\). Regret entries report mean \(\pm\) standard deviation over \(20\) runs.}
\label{tab:neurips-pucb-vs-width}
\scriptsize
\setlength{\tabcolsep}{4pt}
\begin{tabular}{@{}cccccc@{}}
\toprule
\(\delta\) & \(m\) & \(C_{\mathrm{PUCB}}\) & Pareto UCB1 & Width-guided & Cert. rate \\
\midrule
\multicolumn{6}{@{}l}{\textit{Panel A: vary \(\delta\), \(m=1\)}} \\
0.120 & 1 & 21.05 & \(2490.0 \pm 46.8\) & \(458.5 \pm 27.2\) & 100\% \\
0.080 & 1 & 21.99 & \(2564.8 \pm 70.3\) & \(445.6 \pm 28.7\) & 100\% \\
0.040 & 1 & 24.81 & \(2773.7 \pm 97.3\) & \(433.2 \pm 26.3\) & 100\% \\
0.020 & 1 & 30.45 & \(3048.1 \pm 104.4\) & \(427.4 \pm 23.1\) & 100\% \\
0.010 & 1 & 41.72 & \(3185.8 \pm 146.7\) & \(430.6 \pm 30.9\) & 100\% \\
\midrule
\multicolumn{6}{@{}l}{\textit{Panel B: vary \(m\), \(\delta=0.020\)}} \\
0.020 & 4 & 60.89 & \(4751.0 \pm 236.1\) & \(377.0 \pm 19.8\) & 100\% \\
0.020 & 8 & 101.48 & \(6997.2 \pm 395.6\) & \(304.0 \pm 12.9\) & 100\% \\
0.020 & 12 & 142.08 & \(8803.7 \pm 343.8\) & \(221.7 \pm 10.7\) & 100\% \\
\bottomrule
\end{tabular}%
\end{table}

\begin{figure}[t]
\centering
\includegraphics[width=\textwidth]{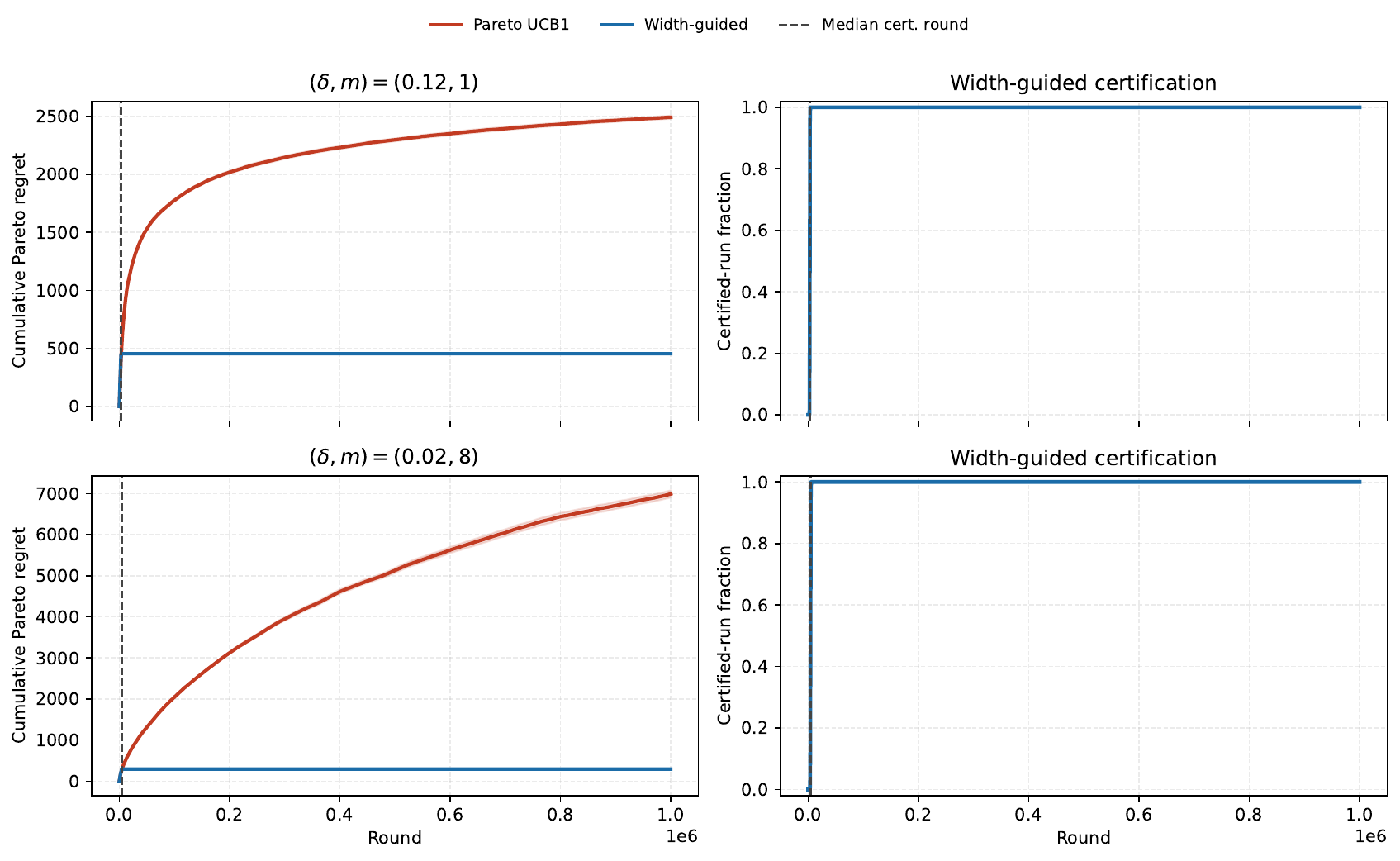}
\caption{Synthetic regret and certification trajectories for two representative settings.  Width-guided certifies in every run and then stops accumulating Pareto regret, while Pareto UCB1 continues to sample from a broader optimistic front.}
\label{fig:synthetic-trajectories}
\end{figure}

\begin{remark}[Synthetic view of the \(C_{\mathrm{PUCB}}\) comparison]
The coefficient \(C_{\mathrm{PUCB}}\) is the normalized leading constant appearing in the comparison with the Pareto-UCB1 gap-dependent bound.  The sufficient comparison condition \(64<C_{\mathrm{PUCB}}(T,\mu)\) from Section~\ref{sec:results} is intentionally conservative, not a necessary condition for finite-horizon improvement.  In Table~\ref{tab:neurips-pucb-vs-width}, Width-guided already has smaller regret in the smaller-\(C_{\mathrm{PUCB}}\) settings, and the advantage becomes more pronounced as \(C_{\mathrm{PUCB}}\) increases through smaller \(\delta\) or larger \(m\).  This is the synthetic counterpart of the real-data pattern: when many positive-gap arms remain attractive near the empirical front, first certification can avoid repeated positive-gap pulls.
\end{remark}

\subsection{Additional Computational Details}
\label{app:computational}

This appendix records reproducibility details for the computational study in Section~\ref{sec:computational}.  It specifies the computational resources used, the real-data benchmark construction, validation/test split, policy definitions, selected Width-guided coefficient, and front-identification diagnostics.  It also records the matched-budget synthetic-family setup.  The main real-data table reports outcome metrics only; the diagnostics here are included to clarify what the first-certification objective does and does not optimize.

\paragraph{Computer resources.} The final evaluation scripts in the open-source artifact were rerun as CPU-only Slurm array jobs; no GPUs were requested or used.  For every final real-data and synthetic evaluation, we used \(20\) array shards, and each shard requested \(20\) CPU cores, \(40\)~GB of memory, and a conservative \(12\)-hour wall-clock limit.  The rerun outputs are included in the open-source artifact.  In this rerun, all completed final-evaluation shards finished within \(6\) minutes of wall-clock time, excluding scheduler queueing.  The software environment used Python 3.9.21, NumPy 2.0.2, and Matplotlib 3.9.4.

\paragraph{Dataset construction.}
The held-out benchmark uses two prepared real-data instances.  In Spotify, the \(114\) arms are genres, \(d=6\), and rewards are empirical track feature vectors sampled with replacement from the selected genre.  In KuaiRec, the \(300\) arms are videos, \(d=26\), and rewards are cohort-specific normalized watch-ratio samples.  All reward coordinates are scaled to \([0,1]\), and each benchmark subset contains \(K=10\) arms.  The Spotify objectives are popularity, danceability, energy, loudness, acousticness, and valence.  The KuaiRec objectives are the \(26\) largest retained user cohorts formed from user activity, follow-count range, and registration-age range.  Spotify subsets use the geometry selector, while KuaiRec uses the contamination-balanced geometry selector, which applies the same misleading/friendly/easy labels but prioritizes candidates where bootstrapped empirical fronts contain positive-gap near-front arms.

\paragraph{Validation, coefficient selection, and held-out evaluation.}
For Width-guided, the only hyperparameter selected on our validation subsets is the confidence coefficient \(c\) in
\[
\beta_a(t)=\sqrt{\frac{c\log T}{N_a(t-1)}} .
\]
The analyzed theorem corresponds to the conservative choice \(c=2\).  For the real-data benchmark, \(c\) is selected on validation subsets at the same horizon \(T=10^6\), and the selected value is frozen before held-out testing.  This empirical implementation should not be read as carrying the high-probability guarantee of Theorem~\ref{thm:regret}.  Baseline hyperparameters are taken from the recommended settings in their original papers rather than reselected on our validation subsets.

\begin{table}[ht]
\centering
\caption{Validation/test split and selected Width-guided coefficient.  ``Subsets \(\times\) seeds'' is per dataset and subset type.  All held-out rows use the same test seed; Spotify easy uses a separately frozen geometry-selected easy suite.}
\label{tab:app-real-tuning}
\scriptsize
\setlength{\tabcolsep}{3pt}
\resizebox{\textwidth}{!}{%
\begin{tabular}{lccccc}
\toprule
Dataset & Validation seed & Test seed & Validation & Test & Selected \(c\) \\
\midrule
Spotify d6, misleading/friendly & 20260604 & 20260704 & \(3\times3\) & \(3\times20\) & \(0.02\) \\
Spotify d6, easy & 20260604 & 20260704 & \(3\times3\) & \(3\times20\) & \(0.00894\) \\
KuaiRec d26, all types & 20260604 & 20260704 & \(3\times3\) & \(3\times20\) & \(0.02\) \\
\bottomrule
\end{tabular}%
}
\end{table}

The held-out table therefore averages \(3\) subsets and \(20\) reward seeds per subset type, for \(60\) runs per method and type.  Candidate subset generation uses frozen subset files selected before the policy simulations; the Spotify easy row uses a separately frozen geometry-selected easy suite.  The validation and test seeds are different, and the same held-out test seed is used across rows, so no held-out reward stream is used when choosing \(c\).

\paragraph{Policy implementations.}
Within a Monte Carlo run, all policies use the same reward stream for each arm.  Width-guided is Algorithm~\ref{alg:ucb_lcb_guided} with the selected coefficient above; after an objective-wise certificate appears, it commits to the certified arm.  Pareto UCB1 pulls each arm once, forms coordinate-wise upper confidence vectors, and samples uniformly from the optimistic empirical Pareto set.  Annealing-Pareto uses the reported empirical decay setting from its original experiments and otherwise samples uniformly from the empirical Pareto set.  Scalarized UCB applies ordinary UCB over a fixed multi-weight scalarization set.

\paragraph{Front-identification and coverage diagnostics.}
Table~\ref{tab:app-real-front-coverage} reports diagnostics on the same held-out runs.  Precision and recall compare the Pareto front of the final empirical mean estimates to the true empirical front of the held-out subset.  Coverage entropy is the normalized entropy of total pull counts over the true Pareto arms.  These metrics evaluate full-front learning and diversity of play, not terminal Pareto-optimal-arm detection.

\begin{table}[ht]
\centering
\caption{Front-identification and coverage diagnostics on the held-out real-data subsets.}
\label{tab:app-real-front-coverage}
\tiny
\setlength{\tabcolsep}{2pt}
\resizebox{\textwidth}{!}{%
\begin{tabular}{lllccc}
\toprule
Dataset & Subset type & Method & Front precision \(\uparrow\) & Front recall \(\uparrow\) & Coverage entropy \(\uparrow\) \\
\midrule
Spotify d6 & Misleading & Width-guided & 0.760 & 0.947 & 0.001 \\
Spotify d6 & Misleading & Pareto UCB1 & 1.000 & 1.000 & 1.000 \\
Spotify d6 & Misleading & Annealing-Pareto & 0.782 & 0.840 & 0.848 \\
Spotify d6 & Misleading & Scalarized UCB & 1.000 & 1.000 & 0.812 \\
\midrule
Spotify d6 & Friendly & Width-guided & 0.850 & 0.921 & 0.002 \\
Spotify d6 & Friendly & Pareto UCB1 & 0.971 & 1.000 & 1.000 \\
Spotify d6 & Friendly & Annealing-Pareto & 0.879 & 0.835 & 0.820 \\
Spotify d6 & Friendly & Scalarized UCB & 0.962 & 1.000 & 0.775 \\
\midrule
Spotify d6 & Easy & Width-guided & 0.713 & 0.919 & 0.000 \\
Spotify d6 & Easy & Pareto UCB1 & 1.000 & 0.972 & 1.000 \\
Spotify d6 & Easy & Annealing-Pareto & 0.852 & 0.881 & 0.838 \\
Spotify d6 & Easy & Scalarized UCB & 1.000 & 0.964 & 0.733 \\
\midrule
KuaiRec d26 & Misleading & Width-guided & 0.416 & 0.993 & 0.001 \\
KuaiRec d26 & Misleading & Pareto UCB1 & 1.000 & 0.990 & 0.667 \\
KuaiRec d26 & Misleading & Annealing-Pareto & 0.925 & 0.940 & 0.620 \\
KuaiRec d26 & Misleading & Scalarized UCB & 1.000 & 0.987 & 0.447 \\
\midrule
KuaiRec d26 & Friendly & Width-guided & 0.815 & 0.950 & 0.003 \\
KuaiRec d26 & Friendly & Pareto UCB1 & 0.963 & 1.000 & 1.000 \\
KuaiRec d26 & Friendly & Annealing-Pareto & 0.910 & 0.912 & 0.934 \\
KuaiRec d26 & Friendly & Scalarized UCB & 0.961 & 1.000 & 0.737 \\
\midrule
KuaiRec d26 & Easy & Width-guided & 0.502 & 1.000 & 0.000 \\
KuaiRec d26 & Easy & Pareto UCB1 & 1.000 & 1.000 & 0.000 \\
KuaiRec d26 & Easy & Annealing-Pareto & 1.000 & 1.000 & 0.000 \\
KuaiRec d26 & Easy & Scalarized UCB & 1.000 & 1.000 & 0.000 \\
\bottomrule
\end{tabular}%
}
\end{table}

The diagnostics show the expected separation between objectives.  Pareto UCB1 and Annealing-Pareto often have better front precision, recall, or coverage entropy because they continue sampling across the empirical front.  Width-guided concentrates on one certified Pareto-optimal arm, which is why its coverage entropy is close to zero.  This behavior is a feature for Pareto regret but not for full-front recovery.

\paragraph{Synthetic-family reproducibility.}
The synthetic family in Figure~\ref{fig:synthetic-comparison}, Table~\ref{tab:neurips-pucb-vs-width}, and Figure~\ref{fig:synthetic-trajectories} uses the same horizon and repetition count as the real-data benchmark: \(T=10^6\), \(20\) final-regret repetitions per setting, and \(20\) trajectory repetitions per plotted setting.  Within each setting, all policies are evaluated on the same synthetic instance family and matched Monte Carlo repetitions.  Unlike the real-data benchmark, the synthetic Width-guided runs use the theorem-matched radius \(\sqrt{2\log T/N_a(t-1)}\), with no validation-tuned coefficient.

\subsection{Additional Theoretical Results}\label{app:theory}
Because the confidence radius depends only on the realized sample count, the natural concentration event is a single horizon-$T$ event that covers every arm, every objective, and every realized sample count up to time $T$.

\begin{proposition}[Global confidence event]
\label{prop:confidence}
Let
\[
\Omega_T :=
\left\{
\left|\widehat{\mu}_a^{(j)}(t-1)-\mu_a^{(j)}\right| \le \beta_a(t)
\quad \forall K<t\le T,\; a \in [K],\; j \in [d]
\right\}.
\]
Then
\[
\Pr(\Omega_T^c) \le 2Kd\,T^{-3}.
\]
\end{proposition}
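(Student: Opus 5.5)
The plan is a union bound over arms, objectives, and realized sample counts, combined with Hoeffding's inequality applied to the arm's i.i.d. sample sequence rather than to the adaptively chosen time indices.

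First, decouple from the adaptive sampling. The reward model fixes, for each arm $a$, an i.i.d. sequence $Y_{a,1},Y_{a,2},\ldots\in[0,1]^d$, and $\widehat{\mu}_a^{(j)}(t-1)$ is exactly the average of the first $N_a(t-1)$ entries of the coordinate sequence $(Y_{a,n}^{(j)})_{n\ge1}$. For $n\ge1$ define the fixed-$n$ average
\[
\bar Y_{a,n}^{(j)}:=\frac1n\sum_{m=1}^n Y_{a,m}^{(j)}.
\]
On every round $K<t\le T$ we have $1\le N_a(t-1)\le T$, the warm start guaranteeing the lower bound. It follows that $\Omega_T$ contains the deterministic-index event
\[
\Omega_T' := \left\{ \bigl|\bar Y_{a,n}^{(j)}-\mu_a^{(j)}\bigr|\le\sqrt{2\log T/n}\ \ \forall a\in[K],\ j\in[d],\ 1\le n\le T \right\}.
\]
This inclusion is the crucial step. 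It avoids any optional-stopping or martingale argument, because the random index $N_a(t-1)$ simply selects one of finitely many fixed $n$.

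Next, for fixed $(a,j,n)$, the two-sided Hoeffding inequality for $[0,1]$-valued i.i.d. variables gives
\[
\Pr\bigl(|\bar Y_{a,n}^{(j)}-\mu_a^{(j)}|>\varepsilon\bigr)\le 2e^{-2n\varepsilon^2}.
\]
With $\varepsilon=\sqrt{2\log T/n}$ this equals $2T^{-4}$. A union bound over $a\in[K]$, $j\in[d]$ and $n\in\{1,\dots,T\}$ yields
\[
\Pr(\Omega_T^c)\le\Pr(\Omega_T'^c)\le 2KdT\cdot T^{-4}=2KdT^{-3}.
\]

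The only delicate point is the decoupling step: one must note that the algorithm's choices do not alter which samples are used, only how many, so independence across arms plus the i.i.d. structure is enough. The case $T=1$ is trivial, since the event then has no constraints; for $T\ge2$ the computation above applies.
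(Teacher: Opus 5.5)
Your proposal is correct and matches the paper's proof essentially step for step: fixed-$n$ averages $\bar Y_{a,n}^{(j)}$ of the latent i.i.d.\ sequence, Hoeffding with $\varepsilon=\sqrt{2\log T/n}$ giving $2T^{-4}$, and a union bound over $a\in[K]$, $j\in[d]$, $n\in\{1,\dots,T\}$. (Minor aside: independence across arms is not actually needed, since the union bound uses only the within-arm i.i.d.\ structure.)
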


Proposition~\ref{prop:confidence} resolves the concentration step. Its role is to reduce the adaptive sampling process to a single event $\Omega_T$ on which every empirical mean lies inside its current confidence interval, simultaneously across all arms, objectives, and rounds. Once we condition on $\Omega_T$, the remaining work is deterministic and algorithmic: we must show that certification returns a Pareto-optimal arm, that each pull at a non-certifying round incurs only uncertainty-scale Pareto regret, and that at every non-certifying round the champion objective still has width of order $g^\dagger$.

\begin{lemma}[Certification identifies a Pareto-optimal arm]
\label{lem:certify}
Fix a round $K<t\le T$ and an objective $j$. On the event $\Omega_T$, if
\[
L_{b_j(t)}^{(j)}(t) > U_{c_j(t)}^{(j)}(t),
\]
then $b_j(t)$ is the unique maximizer of objective $j$, which implies that $b_j(t)\in A^\star$ and hence $\Delta_{b_j(t)}^{\mathrm P}=0$.
\end{lemma}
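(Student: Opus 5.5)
The plan is to argue directly on $\Omega_T$, using only the definitions of $b_j(t)$ and $c_j(t)$ and then invoking Def.~\ref{ass:marginal_pareto_consistency}. Write $b=b_j(t)$ and $c=c_j(t)$.

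\textbf{Step 1: $b$ beats every other arm.} On $\Omega_T$, every arm $a$ satisfies $L_a^{(j)}(t)\le\mu_a^{(j)}\le U_a^{(j)}(t)$. Fix any $a\neq b$. By definition of $c$ as the maximizer of $U^{(j)}(t)$ over $[K]\setminus\{b\}$, we have $U_a^{(j)}(t)\le U_c^{(j)}(t)$. Chaining these bounds gives
\[
\mu_b^{(j)}\ \ge\ L_b^{(j)}(t)\ >\ U_c^{(j)}(t)\ \ge\ U_a^{(j)}(t)\ \ge\ \mu_a^{(j)}.
\]
Hence $\mu_b^{(j)}>\mu_a^{(j)}$ for every $a\neq b$. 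So $M_j=\{b\}$, and $b$ is the unique marginal maximizer of objective $j$. In particular $g_j>0$ and $\Delta_b^{(j)}=0$.

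\textbf{Step 2: $b$ is Pareto-optimal.} Since $\Delta_b^{(j)}=0$, Def.~\ref{ass:marginal_pareto_consistency} gives $b\in A^\star$. The consistency assumption is not even needed here. Uniqueness alone suffices: if some $a$ had $\mu_a\succ\mu_b$, then $\mu_a^{(j)}\ge\mu_b^{(j)}$, contradicting the strict inequality from Step 1.

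\textbf{Step 3: zero Pareto gap.} Every $a\in A^\star$ has $\Delta_a^{\mathrm P}=0$. Alternatively, as computed in Section~\ref{subsec:objective_gap}, any marginal maximizer has $\mu_b^{(j)}+\epsilon>\max_a\mu_a^{(j)}$ for all $\epsilon>0$, so $\mu_b+\epsilon\mathbf{1}$ is undominated. Therefore $\Delta_b^{\mathrm P}=0$.

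The only delicate point is the middle inequality $U_c^{(j)}(t)\ge U_a^{(j)}(t)$ for all $a\neq b$. This holds by the argmax definition regardless of how ties are broken, so there is no real obstacle.
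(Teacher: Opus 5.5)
Your proposal is correct and matches the paper's proof: the same chain $\mu_{b}^{(j)} \ge L_{b}^{(j)}(t) > U_{c}^{(j)}(t) \ge U_a^{(j)}(t) \ge \mu_a^{(j)}$ on $\Omega_T$ gives uniqueness. Pareto optimality then follows from strict superiority on coordinate $j$, without using Def.~\ref{ass:marginal_pareto_consistency}, which is also how the paper argues.
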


Lemma~\ref{lem:certify} is the point where a one-coordinate statistical statement becomes a Pareto regret statement. The algorithm does not certify the full Pareto front; it certifies that one arm is the winner of one objective, and therefore lies in \(A^\star\). Once certification occurs, the stored leader can be exploited forever because every Pareto-optimal arm has zero Pareto gap.

The remaining main task is to control the regret accumulated before certification. All positive Pareto regret comes from non-certifying rounds, that is, rounds at which no certified leader has yet been stored and no objective certifies. The key bridge is that the Pareto gap is upper-bounded by the deficit to the best arm on any single objective. Thus, once the selected objective is fixed, controlling an endpoint's objective-wise deficit controls its Pareto regret, even though the benchmark itself is not scalarized. At a non-certifying round, the algorithm selects the objective with the largest pair width and samples the more uncertain endpoint of its top-two UCB pair. The next lemma shows that every such pull has Pareto regret of only confidence-radius scale.

\begin{lemma}[Non-certifying pulls have uncertainty-scale regret]
\label{lem:local_charge}
Fix a round $K<t\le T$ on $\Omega_T$. Suppose round $t$ is non-certifying. Let
\[
j_t \in \arg\max_{j\in[d]} W_j(t),
\qquad
A_t \in \arg\max_{a\in\{b_{j_t}(t),\,c_{j_t}(t)\}} \beta_a(t).
\]
Then
\[
\Delta_{A_t}^{\mathrm P} \le 4\beta_{A_t}(t).
\]
\end{lemma}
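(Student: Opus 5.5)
The plan is to reduce the Pareto gap to a single-objective deficit on the selected objective $j_t$, and then control that deficit using only the non-certification inequality and the confidence event $\Omega_T$. The choice of $j_t$ as a width maximizer plays no role here; only the facts that the round is non-certifying for $j_t$ and that $A_t$ is the endpoint with the larger radius matter.

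\emph{Step 1 (bridge from Pareto gap to objective gap).} For any arm $a$ and objective $j$, I would show $\Delta_a^{\mathrm P}\le \Delta_a^{(j)}$. Take any $\epsilon>\Delta_a^{(j)}$. Then $\mu_a^{(j)}+\epsilon>\mu_{(1),j}\ge \mu_b^{(j)}$ for every $b$, so no $\mu_b$ can dominate $\mu_a+\epsilon\mathbf{1}$. Hence every such $\epsilon$ lies in the set defining $\Delta_a^{\mathrm P}$, and taking the infimum gives the claim. This is the same reasoning already used in Section~\ref{subsec:objective_gap} for marginal maximizers. It therefore suffices to prove $\Delta_{A_t}^{(j_t)}\le 4\beta_{A_t}(t)$.

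\emph{Step 2 (top-two deficits).} Write $j=j_t$, $b=b_j(t)$, $c=c_j(t)$, and let $a^\star\in M_j$. On $\Omega_T$ we have $U_a^{(j)}(t)\ge \mu_a^{(j)}$ and $L_a^{(j)}(t)\le\mu_a^{(j)}$ for all $a$. Because $b$ maximizes the UCB,
\[
U_b^{(j)}(t)\ge U_{a^\star}^{(j)}(t)\ge \mu_{(1),j}.
\]
From this I would derive the deficits of the two endpoints.
\begin{itemize}
\item For the leader, $\mu_b^{(j)}\ge L_b^{(j)}(t)=U_b^{(j)}(t)-2\beta_b(t)\ge\mu_{(1),j}-2\beta_b(t)$, so $\Delta_b^{(j)}\le 2\beta_b(t)$.
\item For the runner-up, non-certification of $j$ gives $U_c^{(j)}(t)\ge L_b^{(j)}(t)$. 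Then
\[
\mu_c^{(j)}\ge U_c^{(j)}(t)-2\beta_c(t)\ge U_b^{(j)}(t)-2\beta_b(t)-2\beta_c(t)\ge \mu_{(1),j}-2\beta_b(t)-2\beta_c(t),
\]
so $\Delta_c^{(j)}\le 2\beta_b(t)+2\beta_c(t)$.
\end{itemize}

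\emph{Step 3 (charge to the sampled endpoint).} Since $A_t$ has the larger radius in $\{b,c\}$, both $\beta_b(t)$ and $\beta_c(t)$ are at most $\beta_{A_t}(t)$. In either case Step 2 therefore gives $\Delta_{A_t}^{(j)}\le 4\beta_{A_t}(t)$, and Step 1 finishes the proof.

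The only delicate point I expect is the runner-up case. There the non-certification inequality must be chained with optimism of the leader's UCB to reach $\mu_{(1),j}$, which costs two radii and explains the constant $4$. The bridge in Step 1 also needs the strict inequality for $\epsilon>\Delta_a^{(j)}$ handled via the infimum, but this is routine.
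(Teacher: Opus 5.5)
Your proof is correct and follows essentially the same route as the paper. The shared structure is the bridge $\Delta_a^{\mathrm P}\le\Delta_a^{(j)}$ on the selected objective, followed by a leader/runner-up analysis that uses UCB optimism, the non-certification inequality, and $\beta_b(t),\beta_c(t)\le\beta_{A_t}(t)$. The only difference is minor. You write $L_b^{(j)}(t)=U_b^{(j)}(t)-2\beta_b(t)\ge\mu_{(1),j}-2\beta_b(t)$, which handles the runner-up in one step. The paper instead splits on whether $b_{j_t}(t)$ is a true maximizer, and gets the sharper $2\beta_{c_t}(t)$ in one sub-case, which is not needed for the constant $4$.
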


Lemma~\ref{lem:local_charge} is the local regret-to-uncertainty bridge. At a non-certifying round, the sampled arm is one endpoint of an unresolved top-two UCB pair. If that arm had a large Pareto gap, then on the selected objective it would lie too far below the true winner; on $\Omega_T$, that would either force the true winner to outrank it in the UCB ordering or make the pair certify already. Hence the chosen arm can remain active only because uncertainty still masks its deficit, and its instantaneous Pareto regret is controlled by its current confidence radius.

By itself, Lemma~\ref{lem:local_charge} does not yet imply a cumulative regret bound. A complementary ingredient is needed: before certification occurs, the champion objective must still have unresolved width of order $g^\dagger$, and because the algorithm selects the widest pair, every sampled arm at a non-certifying round must still have radius at least of order $g^\dagger$. This is where the scheduling rule matters. The learner does not know in advance which objective will provide the first certificate, but as long as no certificate exists, the champion objective remains unresolved at scale \(g^\dagger\). Choosing the widest race transfers this lower bound to the race actually sampled.

\begin{lemma}[Champion width floor at non-certifying rounds]
\label{lem:width_floor}
Fix a round $K<t\le T$ on $\Omega_T$ and suppose $g^\dagger>0$. Suppose round $t$ is non-certifying. Then the champion objective satisfies
\[
\beta_{b_{j^\dagger}(t)}(t)+\beta_{c_{j^\dagger}(t)}(t)\ge \frac{g^\dagger}{2}.
\]
Consequently, if
\[
j_t \in \arg\max_{j\in[d]} W_j(t),
\qquad
A_t \in \arg\max_{a\in\{b_{j_t}(t),\,c_{j_t}(t)\}} \beta_a(t),
\]
with
\[
b_t := b_{j_t}(t), \qquad c_t := c_{j_t}(t),
\]
then
\[
\beta_{b_t}(t)+\beta_{c_t}(t)\ge \frac{g^\dagger}{2},
\qquad
\beta_{A_t}(t)\ge \frac{g^\dagger}{4}.
\]
\end{lemma}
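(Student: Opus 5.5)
The plan is a deterministic case analysis on the event $\Omega_T$ of Proposition~\ref{prop:confidence}. It looks only at the champion objective $j^\dagger$ and its unique maximizer $a^\star:=a^\star_{j^\dagger}$, which exists because $g^\dagger>0$. Write $b:=b_{j^\dagger}(t)$ and $c:=c_{j^\dagger}(t)$. On $\Omega_T$, every arm $a$ satisfies the two-sided sandwich
\[
\mu_a^{(j^\dagger)}\le U_a^{(j^\dagger)}(t)\le \mu_a^{(j^\dagger)}+2\beta_a(t),
\qquad
\mu_a^{(j^\dagger)}-2\beta_a(t)\le L_a^{(j^\dagger)}(t)\le \mu_a^{(j^\dagger)}.
\]
By the definition of $g_{j^\dagger}$, every $a\neq a^\star$ also has $\mu_a^{(j^\dagger)}\le \mu_{a^\star}^{(j^\dagger)}-g^\dagger$. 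These two facts are the only inputs; the rest splits according to whether the top UCB arm $b$ equals $a^\star$.

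\emph{Case 1: $b=a^\star$.} This is the only case that uses the non-certifying hypothesis. Non-certification on objective $j^\dagger$ gives $L_b^{(j^\dagger)}(t)\le U_c^{(j^\dagger)}(t)$. Lower-bound the left side by $\mu_{a^\star}^{(j^\dagger)}-2\beta_b(t)$. Since $c\neq a^\star$, upper-bound the right side by $\mu_{a^\star}^{(j^\dagger)}-g^\dagger+2\beta_c(t)$. Rearranging gives $g^\dagger\le 2(\beta_b(t)+\beta_c(t))$, which is the claim.

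\emph{Case 2: $b\neq a^\star$.} Here $U_b^{(j^\dagger)}(t)\ge U_{a^\star}^{(j^\dagger)}(t)$ because $b$ maximizes the UCB. This holds whether $c=a^\star$ or not, since $b$ dominates every arm, $a^\star$ included. Combining with the sandwich,
\[
\mu_{a^\star}^{(j^\dagger)}\le U_{a^\star}^{(j^\dagger)}(t)\le U_b^{(j^\dagger)}(t)\le \mu_{a^\star}^{(j^\dagger)}-g^\dagger+2\beta_b(t).
\]
Hence $\beta_b(t)\ge g^\dagger/2$, and a fortiori $\beta_b(t)+\beta_c(t)\ge g^\dagger/2$. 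Ties in the argmax are harmless, since the argument uses only the UCB ordering $U_b\ge U_{a^\star}$.

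For the consequence, $j_t$ maximizes $W_j(t)$ over $j\in[d]$, so $\beta_{b_t}(t)+\beta_{c_t}(t)=W_{j_t}(t)\ge W_{j^\dagger}(t)\ge g^\dagger/2$. Since $A_t$ is the endpoint with the larger radius, $\beta_{A_t}(t)\ge \tfrac12\bigl(\beta_{b_t}(t)+\beta_{c_t}(t)\bigr)\ge g^\dagger/4$.

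The only delicate point is recognizing that Case 2 needs no certification failure: a wrong UCB leader on the champion objective already forces a large radius. Case 1 is where non-certification is essential, and the key is bounding $U_c$ through the gap $g^\dagger$ rather than through $\mu_c$ directly.
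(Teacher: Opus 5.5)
Your proposal is correct and follows the paper's proof essentially step for step. The paper uses the same split on whether $b_{j^\dagger}(t)$ equals the unique maximizer $a^\star_{j^\dagger}$. Non-certification is invoked only in the first case, while in the second the UCB ordering alone forces $\beta_{b_{j^\dagger}(t)}(t)\ge g^\dagger/2$. The consequence is then obtained, as in your plan, through $W_{j_t}(t)\ge W_{j^\dagger}(t)$ and the max-versus-average bound for $\beta_{A_t}(t)$.
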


Lemma~\ref{lem:width_floor} supplies the global complement to Lemma~\ref{lem:local_charge}. Lemma~\ref{lem:local_charge} says that at a non-certifying round, regret can be charged to the current confidence radius of the sampled arm. Lemma~\ref{lem:width_floor} says that as long as certification has not yet happened, those radii cannot collapse to an arbitrarily small scale: the champion objective still forces unresolved width of order $g^\dagger$, so every sampled arm must still have radius at least of order $g^\dagger$.

The main theorem then follows by combining these two facts: each pre-certification pull costs only radius-scale regret, each arm can incur only finitely many such pulls before its radius drops below the level allowed by Lemma~\ref{lem:width_floor}, and after certification the regret becomes identically zero.

\subsection{Proofs}
\label{app:proofs}

\label{app:process_details}

Let
\[
\cF_t := \sigma(A_1,X_1,\ldots,A_t,X_t)
\]
be the natural filtration. The policy is non-anticipatory: $A_{t+1}$ is $\cF_t$-measurable. Under the latent-sequence construction, if $N_a(t-1)\ge 1$, then the empirical mean used by the algorithm is exactly
\[
\widehat{\mu}_a^{(j)}(t-1)
:=
\frac{1}{N_a(t-1)}
\sum_{s=1}^{t-1} X_s^{(j)} \mathbf{1}\{A_s=a\}
=
\frac{1}{N_a(t-1)}
\sum_{n=1}^{N_a(t-1)} Y_{a,n}^{(j)}.
\]

\subsubsection{Proof of Proposition \ref{prop:zero_gap_not_pareto}}

\begin{proof}
Let \(\mu_1=(1,1)\) and \(\mu_2=(1,0)\). Arm \(1\) dominates arm \(2\), so
\(2\notin A^\star\). However, for every \(\epsilon>0\),
\[
\mu_2+\epsilon\mathbf{1}=(1+\epsilon,\epsilon)
\]
is not dominated by \(\mu_1\), because its first coordinate is larger than that
of \(\mu_1\). Hence \(\Delta_2^{\mathrm P}=0\).
\end{proof}

\subsubsection{Proof of Corollary~\ref{cor:certificate_validity}}

\begin{proof}
On \(\Omega_T\), Lemma~\ref{lem:certify} shows that whenever the certification condition
\[
L_{b_j(t)}^{(j)}(t)>U_{c_j(t)}^{(j)}(t)
\]
holds, the certified leader \(b_j(t)\) is the unique maximizer of objective \(j\). Hence every other arm is strictly worse than \(b_j(t)\) on coordinate \(j\), so no other arm can Pareto dominate it. Therefore \(b_j(t)\in A^\star\). Since Algorithm~\ref{alg:ucb_lcb_guided} stores only arms accepted through this certificate condition, a stored non-Pareto-optimal certified leader can occur only on \(\Omega_T^c\). Proposition~\ref{prop:confidence} gives \(\Pr(\Omega_T^c)\le 2KdT^{-3}\), proving the claim.
\end{proof}

\subsubsection{Proof of Proposition~\ref{prop:confidence}}

\begin{proof}
Fix an arm $a$, an objective $j$, and a count $n \in \{1,\dots,T\}$. Let
\[
\bar Y_{a,n}^{(j)} := \frac{1}{n}\sum_{m=1}^n Y_{a,m}^{(j)}.
\]
Because the latent samples
\[
Y_{a,1}^{(j)},Y_{a,2}^{(j)},\dots
\]
are i.i.d.\ with mean $\mu_a^{(j)}$, the average $\bar Y_{a,n}^{(j)}$ is the mean of $n$ i.i.d.\ $[0,1]$-valued random variables. By Hoeffding's inequality,
\[
\Pr\!(
\left|\bar Y_{a,n}^{(j)}-\mu_a^{(j)}\right| > \sqrt{\frac{2\log T}{n}}
)
\le
2\exp(-4\log T)
=
2T^{-4}.
\]
Define
\[
E_{a,j,n}
:=
\left\{
\left|\bar Y_{a,n}^{(j)}-\mu_a^{(j)}\right|
>
\sqrt{\frac{2\log T}{n}}
\right\}.
\]
Then $\Pr(E_{a,j,n}) \le 2T^{-4}$.

Now fix any round $K<t\le T$. By the latent-sequence construction, the empirical mean of arm $a$ and objective $j$ after $t-1$ rounds is the average of the first $N_a(t-1)$ latent samples from that arm, namely
\[
\widehat{\mu}_a^{(j)}(t-1)=\bar Y_{a,N_a(t-1)}^{(j)}.
\]
Hence if $\Omega_T$ fails, then for some $(t,a,j)$ with $K<t\le T$,
\[
\left|\widehat{\mu}_a^{(j)}(t-1)-\mu_a^{(j)}\right|
>
\sqrt{\frac{2\log T}{N_a(t-1)}}.
\]
Setting $n:=N_a(t-1)$, the event $E_{a,j,n}$ occurs. Therefore
\[
\Omega_T^c \subseteq \bigcup_{a \in [K]} \bigcup_{j \in [d]} \bigcup_{n=1}^T E_{a,j,n}.
\]
Applying the union bound yields
\[
\Pr(\Omega_T^c) \le 2KdT\cdot T^{-4} = 2Kd\,T^{-3}.
\]
\end{proof}

\subsubsection{Proof of Lemma~\ref{lem:certify}}

\begin{proof}
Since $b_j(t)$ and $c_j(t)$ are the two largest-UCB arms on objective $j$, every arm $a \neq b_j(t)$ satisfies
\[
U_a^{(j)}(t) \le U_{c_j(t)}^{(j)}(t) < L_{b_j(t)}^{(j)}(t) \le \mu_{b_j(t)}^{(j)}.
\]
Also, on $\Omega_T$,
\[
\mu_a^{(j)} \le U_a^{(j)}(t).
\]
Hence
\[
\mu_a^{(j)} < \mu_{b_j(t)}^{(j)}
\qquad \forall a \neq b_j(t),
\]
so $b_j(t)$ is the unique objective maximizer on objective $j$. No other arm can dominate it, because every other arm is strictly worse on coordinate $j$. Therefore $b_j(t)\in A^\star$, and hence $\Delta_{b_j(t)}^{\mathrm P}=0$.
\end{proof}

\subsubsection{Proof of Lemma~\ref{lem:local_charge}}

\begin{proof}
Let $j_t$ be the selected objective and write
\[
b_t := b_{j_t}(t),
\qquad
c_t := c_{j_t}(t).
\]
For any arm $a$ and objective $j$, define
\[
m_j := \max_{u\in[K]}\mu_u^{(j)},
\qquad
\delta_j(a) := m_j - \mu_a^{(j)} \ge 0.
\]
We first claim that for every arm $a$ and objective $j$,
\[
\Delta_a^{\mathrm P} \le \delta_j(a).
\]
Fix any $\eta > \delta_j(a)$. Then
\[
\mu_a^{(j)}+\eta > m_j \ge \mu_b^{(j)}
\qquad \forall b \in [K],
\]
so the lifted vector $\mu_a+\eta\ones$ is strictly largest in coordinate $j$ and therefore cannot be dominated by any arm. By the definition of the Pareto suboptimality gap, $\Delta_a^{\mathrm P}\le \eta$ for every $\eta>\delta_j(a)$, and thus $\Delta_a^{\mathrm P}\le \delta_j(a)$.

Applying this with $j=j_t$, it suffices to upper-bound
\[
\delta_{j_t}(A_t)=m_{j_t}-\mu_{A_t}^{(j_t)}.
\]
Because round $t$ is non-certifying, $A_t \in \{b_t,c_t\}$.

\paragraph{Case 1: $A_t=b_t$.}
If $\mu_{b_t}^{(j_t)}=m_{j_t}$, then $A_t$ maximizes objective $j_t$ and $\Delta_{A_t}^{\mathrm P}=0$. Otherwise, choose any $a^+\in\argmax_{a\in[K]}\mu_a^{(j_t)}$. Since $b_t$ has the largest UCB on objective $j_t$,
\[
U_{b_t}^{(j_t)}(t) \ge U_{a^+}^{(j_t)}(t) \ge m_{j_t},
\]
while on $\Omega_T$,
\[
U_{b_t}^{(j_t)}(t) \le \mu_{b_t}^{(j_t)} + 2\beta_{b_t}(t).
\]
Hence
\[
m_{j_t} - \mu_{b_t}^{(j_t)} \le 2\beta_{b_t}(t),
\]
so $\delta_{j_t}(A_t)\le 2\beta_{A_t}(t)$ and therefore
\[
\Delta_{A_t}^{\mathrm P}\le 2\beta_{A_t}(t).
\]

\paragraph{Case 2: $A_t=c_t$.}
If $\mu_{b_t}^{(j_t)}=m_{j_t}$, then because round $t$ is non-certifying,
\[
L_{b_t}^{(j_t)}(t) \le U_{c_t}^{(j_t)}(t).
\]
On $\Omega_T$,
\[
L_{b_t}^{(j_t)}(t) \ge m_{j_t} - 2\beta_{b_t}(t),
\qquad
U_{c_t}^{(j_t)}(t) \le \mu_{c_t}^{(j_t)} + 2\beta_{c_t}(t),
\]
which gives
\[
m_{j_t} - \mu_{c_t}^{(j_t)}
\le
2\beta_{b_t}(t)+2\beta_{c_t}(t).
\]
Since $A_t=c_t$ is the more uncertain endpoint,
\[
\beta_{b_t}(t)\le \beta_{c_t}(t)=\beta_{A_t}(t),
\]
so $\delta_{j_t}(A_t) \le 4\beta_{A_t}(t)$ and hence
\[
\Delta_{A_t}^{\mathrm P} \le 4\beta_{A_t}(t).
\]

If instead $\mu_{b_t}^{(j_t)}<m_{j_t}$, then some objective maximizer $a^+\in\argmax_{a\in[K]}\mu_a^{(j_t)}$ belongs to the candidate set for $c_t$, so
\[
U_{c_t}^{(j_t)}(t) \ge U_{a^+}^{(j_t)}(t) \ge m_{j_t}.
\]
On $\Omega_T$,
\[
U_{c_t}^{(j_t)}(t) \le \mu_{c_t}^{(j_t)} + 2\beta_{c_t}(t),
\]
which yields
\[
m_{j_t}-\mu_{c_t}^{(j_t)} \le 2\beta_{c_t}(t)=2\beta_{A_t}(t).
\]
Thus $\Delta_{A_t}^{\mathrm P}\le 2\beta_{A_t}(t)\le 4\beta_{A_t}(t)$.
\end{proof}

\subsubsection{Proof of Lemma~\ref{lem:width_floor}}

\begin{proof}
Since $g^\dagger>0$, objective $j^\dagger$ has a separated winner, denoted by $a_{j^\dagger}^\star$.
Let
\[
b^\dagger := b_{j^\dagger}(t),
\qquad
c^\dagger := c_{j^\dagger}(t).
\]

\paragraph{Case 1: $b^\dagger = a_{j^\dagger}^\star$.}
Because round $t$ is non-certifying, objective $j^\dagger$ does not certify, so
\[
L_{b^\dagger}^{(j^\dagger)}(t) \le U_{c^\dagger}^{(j^\dagger)}(t).
\]
On $\Omega_T$,
\[
L_{b^\dagger}^{(j^\dagger)}(t)
\ge
\mu_{a_{j^\dagger}^\star}^{(j^\dagger)} - 2\beta_{b^\dagger}(t).
\]
Since \(c^\dagger \neq a_{j^\dagger}^\star\) and
\(g^\dagger=g_{j^\dagger}\), the definition of \(g_{j^\dagger}\) gives
\[
\mu_{c^\dagger}^{(j^\dagger)}
\le
\mu_{a_{j^\dagger}^\star}^{(j^\dagger)} - g^\dagger.
\]
Therefore, on $\Omega_T$,
\[
U_{c^\dagger}^{(j^\dagger)}(t)
\le
\mu_{a_{j^\dagger}^\star}^{(j^\dagger)} - g^\dagger + 2\beta_{c^\dagger}(t).
\]
Combining the displays gives
\[
2\beta_{b^\dagger}(t)+2\beta_{c^\dagger}(t) \ge g^\dagger,
\]
and hence
\[
\beta_{b^\dagger}(t)+\beta_{c^\dagger}(t) \ge \frac{g^\dagger}{2}.
\]

\paragraph{Case 2: $b^\dagger \neq a_{j^\dagger}^\star$.}
By definition of $b^\dagger$,
\[
U_{b^\dagger}^{(j^\dagger)}(t)
\ge
U_{a_{j^\dagger}^\star}^{(j^\dagger)}(t)
\ge
\mu_{a_{j^\dagger}^\star}^{(j^\dagger)}.
\]
On $\Omega_T$,
\[
U_{b^\dagger}^{(j^\dagger)}(t)
\le
\mu_{b^\dagger}^{(j^\dagger)} + 2\beta_{b^\dagger}(t)
\le
\mu_{a_{j^\dagger}^\star}^{(j^\dagger)} - g^\dagger + 2\beta_{b^\dagger}(t),
\]
which implies
\[
\beta_{b^\dagger}(t)\ge \frac{g^\dagger}{2}.
\]
Thus
\[
\beta_{b^\dagger}(t)+\beta_{c^\dagger}(t)\ge \frac{g^\dagger}{2}.
\]

In both cases,
\[
\beta_{b^\dagger}(t)+\beta_{c^\dagger}(t)\ge \frac{g^\dagger}{2}.
\]
Since round $t$ is non-certifying, the algorithm picks
\[
j_t \in \arg\max_{j\in[d]} W_j(t),
\]
so
\[
W_{j_t}(t)\ge W_{j^\dagger}(t).
\]
Recalling that $W_j(t)=\beta_{b_j(t)}(t)+\beta_{c_j(t)}(t)$, we obtain
\[
\beta_{b_t}(t)+\beta_{c_t}(t)\ge \frac{g^\dagger}{2}.
\]
Finally,
\[
\beta_{A_t}(t)=\max\{\beta_{b_t}(t),\beta_{c_t}(t)\}
\ge
\frac{\beta_{b_t}(t)+\beta_{c_t}(t)}{2}
\ge
\frac{g^\dagger}{4}.
\]
\end{proof}

\subsubsection{Proof of Theorem~\ref{thm:regret}}

\begin{proof}
Let $\tau_{\mathrm{cert}}$ be the first round after the warm start at which some objective certifies, with the convention $\tau_{\mathrm{cert}}:=T+1$ if no certification occurs by time $T$.

\paragraph{Good-event decomposition.}
The warm start contributes at most
\[
K\Delta_{\max}^{\mathrm P}.
\]
On $\Omega_T$, Lemma~\ref{lem:certify} shows that the first certified leader has zero Pareto gap. Since the algorithm stores that leader and exploits it forever,
\[
\Delta_{A_t}^{\mathrm P}=0
\qquad\text{for every } t \ge \tau_{\mathrm{cert}}
\quad\text{on } \Omega_T.
\]
Thus only rounds $K<t<\tau_{\mathrm{cert}}$ contribute further regret on $\Omega_T$.

\paragraph{Length of the pre-certification phase.}
Fix any round $K<t<\tau_{\mathrm{cert}}$ with $A_t=a$. Since round $t$ is non-certifying, Lemma~\ref{lem:width_floor} gives
\[
\beta_{A_t}(t) \ge \frac{g^\dagger}{4}.
\]
Hence
\[
\sqrt{\frac{2\log T}{N_a(t-1)}} \ge \frac{g^\dagger}{4},
\]
which implies
\[
N_a(t-1) \le \frac{32\log T}{(g^\dagger)^2}.
\]
Now let $m_a$ be the number of times arm $a$ is pulled after the warm start and before certification. The successive pre-pull counts of those pulls are exactly
\[
1,2,\dots,m_a.
\]
Since each must be at most $32\log T/(g^\dagger)^2$, we obtain the direct bound
\[
m_a \le \frac{32\log T}{(g^\dagger)^2}.
\]

\paragraph{Regret accumulated before certification.}
For each arm $a$, let
\[
t_{a,1}<t_{a,2}<\cdots<t_{a,m_a}
\]
be the rounds at which arm $a$ is pulled after the warm start and before certification. Then
\[
\sum_{t=K+1}^{\tau_{\mathrm{cert}}-1}\Delta_{A_t}^{\mathrm P}
=
\sum_{a=1}^K \sum_{s=1}^{m_a}\Delta_{A_{t_{a,s}}}^{\mathrm P}.
\]
Since each $t_{a,s}$ is non-certifying, Lemma~\ref{lem:local_charge} gives
\[
\Delta_{A_{t_{a,s}}}^{\mathrm P} \le 4\beta_{A_{t_{a,s}}}(t_{a,s})=4\beta_a(t_{a,s}).
\]
Because $t_{a,s}$ is the $s$th post-warm-start pull of arm $a$,
\[
N_a(t_{a,s}-1)=s,
\qquad
\beta_a(t_{a,s})=\sqrt{\frac{2\log T}{s}}.
\]
Therefore
\[
\sum_{t=K+1}^{\tau_{\mathrm{cert}}-1}\Delta_{A_t}^{\mathrm P}
\le
4\sum_{a=1}^K \sum_{s=1}^{m_a}\sqrt{\frac{2\log T}{s}}.
\]
Using $\sum_{s=1}^{m_a}s^{-1/2}\le 2\sqrt{m_a}$ and the bound on $m_a$ gives
\begin{align*}
\sum_{a=1}^K \sum_{s=1}^{m_a}\sqrt{\frac{2\log T}{s}}
&\le
\sum_{a=1}^K \sqrt{2\log T}\sum_{s=1}^{m_a}s^{-1/2} \\
&\le
2\sqrt{2\log T}\sum_{a=1}^K \sqrt{m_a} \\
&\le
2K\sqrt{2\cdot \frac{32\log T}{(g^\dagger)^2}\cdot \log T}
\le
16\,\frac{K\log T}{g^\dagger}.
\end{align*}
Hence
\[
\sum_{t=K+1}^{\tau_{\mathrm{cert}}-1}\Delta_{A_t}^{\mathrm P}
\le
64\,\frac{K\log T}{g^\dagger}.
\]
Combining this estimate with zero regret after certification yields
\[
R_T^{\mathrm P}\mathbf{1}\{\Omega_T\}
\le
K\Delta_{\max}^{\mathrm P}
+
64\,\frac{K\log T}{g^\dagger}.
\]

\paragraph{Bad-event contribution.}
Trivially,
\[
R_T^{\mathrm P}\le T\Delta_{\max}^{\mathrm P}.
\]
Therefore Proposition~\ref{prop:confidence} yields
\[
\E\!\left[R_T^{\mathrm P}\mathbf{1}\{\Omega_T^c\}\right]
\le
T\Delta_{\max}^{\mathrm P}\Pr(\Omega_T^c)
\le
2Kd\,\Delta_{\max}^{\mathrm P}T^{-2}.
\]
Adding the bad-event term proves the theorem.
\end{proof}

\subsubsection{Comparison with Pareto UCB1}
\label{app:comparison_pucb}

Theorem~1 of \citep{drugan2013designing} states that the Pareto UCB1 policy satisfies
\[
\E\!\left[R_T^{\mathrm P}\right]
\le
\sum_{i\notin A^\star}
\frac{8\,\log\!(T(d|A^\star|)^{1/4})}{\Delta_i^{\mathrm P}}
\;+\;
(1+\frac{\pi^2}{3})\sum_{i\notin A^\star}\Delta_i^{\mathrm P}.
\]
This comparison is most transparent in the generic non-boundary case where every arm outside the Pareto set has positive Pareto gap. If boundary dominated arms with zero Pareto gap exist, they are regret-neutral under the criterion and the same comparison should be read over the positive-gap arms. If $A^\star=[K]$, then the leading term is the empty sum and there is nothing further to compare. We therefore focus on the nontrivial generic case $A^\star\neq[K]$ and define
\[
\Delta_{\min}^{\mathrm P} := \min_{i\notin A^\star}\Delta_i^{\mathrm P}.
\]
To compare leading terms, factor out the common logarithmic term:
\[
\sum_{i\notin A^\star}
\frac{8\,\log\!(T(d|A^\star|)^{1/4})}{\Delta_i^{\mathrm P}}
=
8\,\log\!(T(d|A^\star|)^{1/4})
\sum_{i\notin A^\star}\frac{1}{\Delta_i^{\mathrm P}}.
\]
Since every dominated arm satisfies $\Delta_i^{\mathrm P}\ge \Delta_{\min}^{\mathrm P}$, we have
\[
\sum_{i\notin A^\star}\frac{1}{\Delta_i^{\mathrm P}}
\le
\frac{K-|A^\star|}{\Delta_{\min}^{\mathrm P}}.
\]
Therefore the leading term is at most
\[
\frac{8(K-|A^\star|)\log\!(T(d|A^\star|)^{1/4})}{\Delta_{\min}^{\mathrm P}}.
\]
Writing this on the same $K\log T/g^\dagger$ scale as Theorem~\ref{thm:regret} gives
\[
\frac{8(K-|A^\star|)\log\!(T(d|A^\star|)^{1/4})}{\Delta_{\min}^{\mathrm P}}
=
(
8\cdot \frac{K-|A^\star|}{K}
\cdot
\frac{g^\dagger}{\Delta_{\min}^{\mathrm P}}
\cdot
\frac{\log\!(T(d|A^\star|)^{1/4})}{\log T}
)
\frac{K\log T}{g^\dagger}.
\]
From the original leading term, the exact normalized coefficient induced by \citep{drugan2013designing} is
\[
C_{\mathrm{PUCB}}
:=
8\cdot \frac{g^\dagger}{K}
(\sum_{i\notin A^\star}\frac{1}{\Delta_i^{\mathrm P}})
\frac{\log\!(T(d|A^\star|)^{1/4})}{\log T}.
\]
Our leading coefficient $64$ is smaller exactly when
\[
64 < C_{\mathrm{PUCB}},
\]
that is, when
\[
\frac{g^\dagger}{K}\sum_{i\notin A^\star}\frac{1}{\Delta_i^{\mathrm P}}
>
8\cdot \frac{\log T}{\log\!(T(d|A^\star|)^{1/4})}.
\]
This is the exact pointwise comparison between the two leading terms.

The $\Delta_{\min}^{\mathrm P}$ rewriting used in Remark~\ref{rem:compare_pucb} is a coarser upper envelope. Indeed,
\[
C_{\mathrm{PUCB}}
\le
\overline{C}_{\mathrm{PUCB}}
:=
8\cdot \frac{K-|A^\star|}{K}
\cdot
\frac{g^\dagger}{\Delta_{\min}^{\mathrm P}}
\cdot
\frac{\log\!(T(d|A^\star|)^{1/4})}{\log T}.
\]
Using
\[
\frac{\log T}{\log\!(T(d|A^\star|)^{1/4})}
=
(
1+\frac{\log(d|A^\star|)}{4\log T}
)^{-1},
\]
the factor multiplying $g^\dagger/\Delta_{\min}^{\mathrm P}$ is asymptotically close to $8(K-|A^\star|)/K$. This upper envelope highlights the setting in which the two bounds separate most clearly: when $\Delta_{\min}^{\mathrm P}\ll g^\dagger$, or more generally when $\sum_{i\notin A^\star}1/\Delta_i^{\mathrm P}$ is large, the earlier leading coefficient can be much larger than $64$. If instead the dominated-arm Pareto gaps are all comparable to $g^\dagger$ and only a few dominated arms remain, the earlier coefficient is also constant-order and can be numerically smaller.

\subsubsection{Proof of Proposition~\ref{prop:lower_bound}}

\begin{proof}
We first compute the Pareto geometry of the subclass and verify that the theorem parameter satisfies $g^\dagger=\Delta_{\mathrm{sc}}$. We then show that Pareto regret reduces exactly to the ordinary single-objective regret of the reduced Bernoulli bandit instance. Finally, because each reward vector is one Bernoulli sample duplicated across coordinates, the problem is equivalent to that reduced single-objective Bernoulli bandit, so Theorem~2 of \citep{lai} applies.

Set
\[
\theta_1=\frac12+\Delta_{\mathrm{sc}},
\qquad
\theta_a=\frac12
\quad\text{for } a=2,\ldots,K.
\]
Then every arm has mean vector
\[
\mu_a = \theta_a \ones.
\]
Hence
\[
\mu_1^{(j)}=\frac12+\Delta_{\mathrm{sc}}>\frac12=\mu_a^{(j)}
\qquad\forall a\ge 2,\ \forall j\in[d],
\]
so arm $1$ dominates every other arm and therefore
\[
A^\star=\{1\}.
\]
Because arm $1$ is the unique maximizer on every objective,
\[
g_j
=
\mu_1^{(j)}-\max_{a\neq 1}\mu_a^{(j)}
=
\Delta_{\mathrm{sc}}
\qquad\forall j\in[d].
\]
Thus $g^\dagger=\Delta_{\mathrm{sc}}$.

For arm $1$, $\Delta_1^{\mathrm P}=0$. Fix any arm $a\ge 2$. Since
\[
\mu_a+\epsilon \ones=(\tfrac12+\epsilon)\ones,
\]
and every arm $b\ge 2$ has mean vector $(1/2)\ones$, it is enough to compare only with $\mu_1$. The lifted vector is dominated by $\mu_1=(\tfrac12+\Delta_{\mathrm{sc}})\ones$ whenever $\epsilon<\Delta_{\mathrm{sc}}$, and it is no longer dominated once $\epsilon\ge \Delta_{\mathrm{sc}}$. Therefore
\[
\Delta_a^{\mathrm P}=\Delta_{\mathrm{sc}}
\qquad\text{for every } a=2,\ldots,K.
\]
It follows that
\[
R_T^{\mathrm P}
=
\sum_{a=2}^K \Delta_a^{\mathrm P} N_a(T)
=
\Delta_{\mathrm{sc}}\sum_{a=2}^K N_a(T).
\]

Because each reward vector is obtained by duplicating one scalar Bernoulli sample across all coordinates, observing
\[
Y_{a,n}=(Z_{a,n},\ldots,Z_{a,n})
\]
is exactly the same as observing the scalar sample $Z_{a,n}$. Thus one pull in the duplicated-coordinate instance provides exactly the same information as one pull in the reduced single-objective $K$-armed Bernoulli bandit with means
\[
\theta_1=\frac12+\Delta_{\mathrm{sc}},
\qquad
\theta_a=\frac12
\quad\text{for } a=2,\ldots,K.
\]

Let $N_a(T)$ denote the pull count of arm $a$ up to time $T$. By assumption, the policy induced by the duplicated-coordinate reduction is uniformly good for the reduced Bernoulli bandit. We may therefore apply Theorem~2 of \citep{lai}, which lower-bounds the expected number of samples drawn from an inferior population under such a rule. Fix a suboptimal arm $a\in\{2,\ldots,K\}$ and match it to population $j=a$ in Lai--Robbins. In our reduced Bernoulli bandit,
\[
p(\theta_a)=\frac12,
\qquad
p(\theta^\star)=\frac12+\Delta_{\mathrm{sc}}.
\]
The quantity $I(\theta_a,\theta^\star)$ is the least information cost of changing the inferior population into one whose mean exceeds $p(\theta^\star)$. For Bernoulli distributions this is the Bernoulli Kullback--Leibler divergence
\[
\mathrm{kl}(p,q)
:=
p\log\frac{p}{q}+(1-p)\log\frac{1-p}{1-q}.
\]
Since the inferior arm has Bernoulli mean $1/2$, the alternative means making it better than the current best arm are exactly those with
\[
q>\frac12+\Delta_{\mathrm{sc}}.
\]
Therefore,
\[
I(\theta_a,\theta^\star)
=
\inf_{q>1/2+\Delta_{\mathrm{sc}}}\mathrm{kl}\!(\frac12,q).
\]
For fixed $p=1/2$, the map $q\mapsto \mathrm{kl}(1/2,q)$ is increasing on $(1/2,1)$, so
\[
I(\theta_a,\theta^\star)
=
\mathrm{kl}\!(\frac12,\frac12+\Delta_{\mathrm{sc}}).
\]
Hence Theorem~2 of \citep{lai} gives, for each suboptimal arm $a=2,\ldots,K$,
\[
\liminf_{T\to\infty}\frac{\E[N_a(T)]}{\log T}
\ge
\frac{1}{\mathrm{kl}(1/2,\ 1/2+\Delta_{\mathrm{sc}})}.
\]
Summing over $a=2,\ldots,K$ and using
\[
R_T^{\mathrm P}
=
\Delta_{\mathrm{sc}}\sum_{a=2}^K N_a(T)
\]
yields
\[
\liminf_{T\to\infty}\frac{\E[R_T^{\mathrm P}]}{\log T}
\ge
(K-1)\frac{\Delta_{\mathrm{sc}}}{\mathrm{kl}(1/2,\ 1/2+\Delta_{\mathrm{sc}})}.
\]

Next,
\[
\mathrm{kl}\!(\frac12,\frac12+\Delta_{\mathrm{sc}})
=
-\frac12\log(1-4\Delta_{\mathrm{sc}}^2).
\]
Using
\[
-\log(1-x)\le \frac{x}{1-x}
\qquad\text{for } 0\le x<1,
\]
with $x=4\Delta_{\mathrm{sc}}^2$, we obtain
\[
\mathrm{kl}\!(\frac12,\frac12+\Delta_{\mathrm{sc}})
\le
\frac{2\Delta_{\mathrm{sc}}^2}{1-4\Delta_{\mathrm{sc}}^2}.
\]
Because $\Delta_{\mathrm{sc}}\le 1/4$, we have $1-4\Delta_{\mathrm{sc}}^2\ge 3/4$, and therefore
\[
\mathrm{kl}\!(\frac12,\frac12+\Delta_{\mathrm{sc}})
\le
\frac{8}{3}\Delta_{\mathrm{sc}}^2.
\]
Substituting this bound yields
\[
\liminf_{T\to\infty}\frac{\E[R_T^{\mathrm P}]}{\log T}
\ge
\frac{3}{8}\frac{K-1}{\Delta_{\mathrm{sc}}}.
\]
Since $g^\dagger=\Delta_{\mathrm{sc}}$, the claimed $\Omega(K\log T/g^\dagger)$ lower bound follows.

\end{proof}

\end{document}